\documentclass[sigconf, nonacm]{acmart}

\AtBeginDocument{%
  }

\setcopyright{none}
\usepackage{rotating}

\newcommand{\attacksetting}{PGD-250, $\ell_2$, $\epsilon=7.0$, step size $\alpha=0.3$, non-targeted}

\usepackage{mathtools}
\usepackage{bm}
\usepackage{enumitem}
\usepackage{hyperref}

\theoremstyle{definition}
\newtheorem{definition}{Definition}
\newtheorem{assumption}{Assumption}
\newtheorem{theorem}{Theorem}
\newtheorem{proposition}{Proposition}

\theoremstyle{remark}
\newtheorem{remark}{Remark}

\newcommand{\norm}[1]{\left\lVert #1 \right\rVert}
\newcommand{\inner}[2]{\left\langle #1,#2 \right\rangle}

\begin{document}

%%
%% The "title" command has an optional parameter,
%% allowing the author to define a "short title" to be used in page headers.
% \title{Why Reinforcement Learning Makes Gradient-Based Adversarial Attacks Less Effective: A Study of Exploration-Induced Robustness}
\title{Reinforcement Learning Disrupts Gradient-Based Adversarial Optimization}

% %
% % The "author" command and its associated commands are used to define
% % the authors and their affiliations.
% % Of note is the shared affiliation of the first two authors, and the
% % "authornote" and "authornotemark" commands
% % used to denote shared contribution to the research.
% \author{Ben Trovato}
% \authornote{Both authors contributed equally to this research.}
% \email{trovato@corporation.com}
% \orcid{1234-5678-9012}
% \author{G.K.M. Tobin}
% \authornotemark[1]
% \email{webmaster@marysville-ohio.com}
% \affiliation{%
%   \institution{Institute for Clarity in Documentation}
%   \city{Dublin}
%   \state{Ohio}
%   \country{USA}
% }

\author{Xinhai Zou}
\authornote{The first three authors contributed equally to this research.}
\affiliation{%
  \institution{COSIC, KU Leuven}
  \city{Leuven}
  \country{Belgium}}
\email{xinhai.zou@esat.kuleuven.com}
\orcid{0009-0008-3703-1253}

\author{Chang Zhao}
\authornotemark[1]
\affiliation{%
  \institution{Imec}
  \city{Leuven}
  \country{Belgium}}
\affiliation{%
  \institution{Brubotics, VUB}
  \city{Brussels}
  \country{Belgium}
}
\email{changzhao@imec.be}
\orcid{0009-0007-7988-5807}

\author{Alireza Aghabagherloo}
\authornotemark[1]
\affiliation{%
  \institution{COSIC, KU Leuven}
  \city{Leuven}
  \country{Belgium}}
\email{alireza.aghabagherloo@esat.kuleuven.be}
\orcid{0000-0003-0783-0261}

\author{Dave Singelée}
\affiliation{%
  \institution{DistriNet, KU Leuven}
  \city{Leuven}
  \country{Belgium}}
\email{dave.singelee@kuleuven.be}
\orcid{0000-0001-9084-698X}

\author{Robin Degraeve}
\affiliation{%
  \institution{Imec}
  \city{Leuven}
  \country{Belgium}}
\email{Robin.degraeve@imec.be}
\orcid{0000-0002-4609-5573}

\author{Bart Preneel}
\affiliation{%
  \institution{COSIC, KU Leuven}
  \city{Leuven}
  \country{Belgium}}
\email{bart.preneel@esat.kuleuven.be}
\orcid{0000-0003-2005-9651}

\begin{abstract}
    Gradient-based adversarial attacks remain a dominant threat to deep neural networks (DNNs), as they exploit gradient information to efficiently optimize adversarial perturbations. To address this, we investigate whether reinforcement learning (RL) training can disrupt the gradient structure used by attackers, by training image classifiers with policy-gradient objectives and $\varepsilon$-greedy exploration. Through systematic experiments across CIFAR-10, CIFAR-100, and ImageNet-100 with multiple architectures, we find that RL-trained classifiers significantly disrupt gradient-based adversarial optimization. For example, on a representative 6-layer CNN, adversarial accuracy under PGD on CIFAR-10 increases from 5\% (SL) to 56\% (RL), while clean accuracy decreases only 2–3\%. To explain this, we conduct a comprehensive mechanism analysis using loss landscape visualization, static and dynamic gradient indicators, and predictive entropy. Our analysis reveals that RL acts as an implicit regularizer, producing models with highly unstable gradient directions and smaller gradient magnitudes. This combination makes each PGD step both unreliable in direction and limited in magnitude, causing gradient-based attacks to fail within practical iteration budgets. We further show that combining RL with adversarial training (RL-adv) provides a dual-layer defense operating at two complementary levels: RL degrades gradient information available to attackers (gradient-level defense), while adversarial training strengthens decision boundaries (boundary-level defense). RL-adv achieves the highest robustness across all major attack types evaluated, including gradient-based (PGD, AutoAttack), transfer-based, and query-based attacks, outperforming SL-adv by a significant margin (e.g., 36.27\% vs. 24.87\% under APGD-CE on CIFAR-10). These findings identify RL-induced gradient disruption as a complementary robustness mechanism and motivate future research on hybrid SL-RL training schedules that combine SL's efficiency with RL's gradient-regularization properties.
\end{abstract}

\keywords{Adversarial Robustness, Reinforcement Learning, Image Classification, Gradient-based Attack, Mechanism Analysis}
%% A "teaser" image appears between the author and affiliation
%% information and the body of the document, and typically spans the
%% page.
\begin{teaserfigure}
  \includegraphics[width=\textwidth]{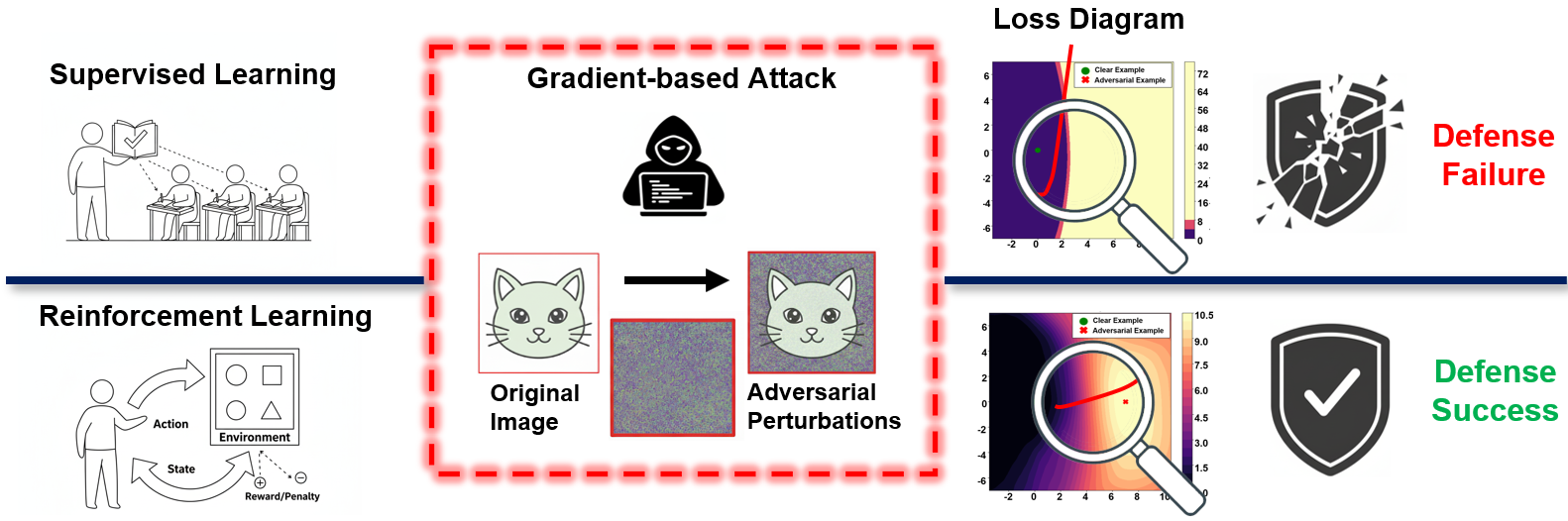}
  \caption{We compare \textbf{SL} and \textbf{RL} training for image classification under gradient-based adversarial attacks. \textbf{Top (SL):} SL-trained models retain sharper, more informative gradients, which adversaries can readily exploit. \textbf{Bottom (RL):} RL-trained models exhibit flatter, less informative gradients, offering no clear gradient direction for attack. This mechanism matches our empirical results on CIFAR-10/100 and ImageNet-100.}
  \label{fig:intro_fig}
\end{teaserfigure}

% \begin{figure}[htbp]
%     \centering
%     \includegraphics[width=0.85\linewidth]{figs/introduction_figure.png}
%     \caption{We compare \textbf{SL} and \textbf{RL} training for image classification under gradient-based adversarial attacks. \textbf{Top (SL):} SL-trained models retain sharper, more informative gradients, which adversaries can readily exploit. \textbf{Bottom (RL):} RL-trained models exhibit flatter, less informative gradients, offering no clear gradient direction for attack. This mechanism matches our empirical results on CIFAR-10/100 and ImageNet-100.}
%     \label{fig:intro_fig}
% \end{figure}

% \received{20 February 2007}
% \received[revised]{12 March 2009}
% \received[accepted]{5 June 2009}

%%
%% This command processes the author and affiliation and title
%% information and builds the first part of the formatted document.
\maketitle

\section{Introduction}\label{sec:introduction} 
Deep neural networks (DNNs) are widely used in modern AI systems, but their vulnerability to adversarial attacks remains a fundamental security concern. Adversarial attacks generate negligible and imperceptible perturbations to DNN inputs, leading to severe misclassifications and raising risks in safety-critical and security-critical applications~\citep{szegedyIntriguingPropertiesNeural2014,goodfellowExplainingHarnessingAdversarial2014,eykholtRobustPhysicalWorldAttacks2018,biggioWildPatternsTen2018}. A key property of these attacks is that they exploit the gradient-based optimization used to train models in supervised learning (SL)~\citep{goodfellowExplainingHarnessingAdversarial2014}. In standard SL settings, loss gradients are often smooth, stable, and highly informative, enabling efficient adversarial optimization via methods such as projected gradient descent (PGD)~\citep{madryDeepLearningModels2018}. As a result, the gradient itself becomes an attack surface: once reliable gradients are available, attackers can efficiently construct adversarial examples that manipulate model predictions while remaining imperceptible~\citep{goodfellowExplainingHarnessingAdversarial2014, eykholtRobustPhysicalWorldAttacks2018}.

To mitigate this vulnerability and increase model robustness, a wide range of defenses have been proposed, including noise injection, data augmentation, and adversarial training~\citep{bishopTrainingNoiseEquivalent1995,cohenCertifiedAdversarialRobustness2019,hendrycksAugMixSimpleData2020,zhangMixupEmpiricalRisk2018,madryDeepLearningModels2018,kurakinAdversarialMachineLearning2017}. However, prior work has shown that even robust models can be compromised by adaptive attackers, especially when defenses rely on gradients that remain exploitable on the training set. These observations raise a fundamental question: \textbf{\textit{Can alternative training paradigms alter gradient structure in a way that fundamentally decreases adversarial attack effectiveness?}}

Reinforcement learning (RL), another core paradigm in machine learning, provides a different optimization mechanism based on exploration and policy gradients, rather than purely deterministic gradient descent. While RL has been widely studied in control and sequential decision-making tasks~\citep{mnihHumanlevelControlDeep2015,levineEndtoEndTrainingDeep2016,pintoRobustAdversarialReinforcement2017,akhtarThreatAdversarialAttacks2018,biggioWildPatternsTen2018}, its role in adversarial robustness for classification remains underexplored.

In this work, we investigate how RL influences adversarial robustness by comparing RL-trained classifiers with their SL counterparts. Our central hypothesis is that RL alters gradient structure in a way that impacts the feasibility of gradient-based adversarial optimization. We conduct experiments on CIFAR-10, CIFAR-100, and ImageNet-100, and observe that RL-trained models exhibit strong robustness to gradient-based attacks such as PGD. For instance, adversarial accuracy of RL-trained models under PGD increases substantially compared to SL-trained models, while clean accuracy decreases only slightly. However, this robustness does not generalize to transfer-based attacks. Adversarial examples generated on RL models are difficult to obtain, whereas adversarial examples generated on supervised models transfer effectively and significantly degrade the performance of RL-trained models. We further show that this asymmetric robustness arises from a fundamental difference in gradient behavior between SL and RL. Compared to SL, RL training induces smaller gradient magnitudes and highly unstable gradient directions, which disrupt the optimization process used by gradient-based attackers. As a result, adversarial directions are difficult to identify via direct optimization, while adversarial regions in the input space persist in RL-trained models. Consequently, adversarial examples generated on other models can still transfer successfully. This finding highlights both the strength and the limitations of RL-based training. On the one hand, RL increases robustness against gradient-based attacks by degrading gradient usability through smaller magnitudes and unstable directions. On the other hand, it does not eliminate adversarial regions in the input space, leaving models vulnerable to transfer-based attacks. This suggests that robustness gained from disrupting optimization during adversarial attack is insufficient for comprehensive security. \textbf{\textit{Instead, effective defenses should combine RL-induced gradient instability with adversarial training in order to improve robustness against both gradient-based and transfer-based adversaries.}} Our main contributions are summarized as follows:
\begin{enumerate}
    \item We demonstrate that RL-based training significantly reduces the effectiveness of gradient-based adversarial attacks.
    \item We identify a mechanism based on reduced gradient magnitude and increased gradient instability that disrupts adversarial optimization dynamics.
    \item We uncover a critical limitation: adversarial examples generated from supervised models transfer effectively to RL-trained models, showing that degrading gradients does not eliminate adversarial vulnerability.
    \item We show that combining RL with adversarial training (RL-adv) improves robustness across both gradient-based and transfer-based threat models.
\end{enumerate}

\section{Related Work}\label{sec:relatedwork}
Deep neural networks (DNNs) trained with supervised learning (SL) have achieved remarkable success in image classification, surpassing human performance on several vision tasks~\citep{heDelvingDeepRectifiers2015}. However, in addition to the capability of DNN-based image classifiers, they also show susceptibility to a wide range of attacks~\citep{ozdagAdversarialAttacksDefenses2018}. DNNs are susceptible to privacy and security threats such as (i) data poisoning~\citep{zhaoDataPoisoningDeep2025}, where adversaries inject poisoned data into the training set to corrupt the model, (ii) evasion (also known as adversarial attacks), where input samples are intentionally perturbed in a way that causes the model to misclassify them during the testing phase, (iii) model inversion, where the goal is feature reconstruction of samples from the training set using the model’s outputs~\citep{fredriksonModelInversionAttacks2015}, (iv) membership inference attacks, which attempt to determine whether a specific data sample was part of the model’s training dataset or not, etc~\citep{shokriMembershipInferenceAttacks2017}. From all these attacks, evasion attacks are the most well-known ones against DNNs~\citep{ilyasAdversarialExamplesAre2019}. 

These types of attacks can be considered especially practical because they exploit non-robust features learned by DNNs, making them particularly concerning because even naturally occurring perturbations can sometimes induce misclassification behavior similar to adversarial examples \citep{ilyasAdversarialExamplesAre2019}. The perturbations used to generate adversarial examples may be perceptible \citep{schneiderDualAdversarialAttacks2023}, where the attacker aims to deceive both humans and DNNs, or imperceptible to the human eye \citep{aghabagherlooImpactDataDuplication2025, ilyasAdversarialExamplesAre2019}. In most scenarios, adversaries prefer imperceptible perturbations because they can mislead the DNN while remaining visually indistinguishable to humans.
 Adversarial attacks are generally categorized into white-box and black-box settings. In white-box attacks, the adversary has full access to the model architecture, parameters, and gradients. Most white-box attacks are gradient-based, meaning that they use the gradient of the loss with respect to the input to determine how to perturb the sample in order to maximize the prediction error. Representative gradient-based attacks include Projected Gradient Descent (PGD), Fast Gradient Sign Method (FGSM), and Carlini \& Wagner (C\&W) attacks.

In black-box attacks, the adversary has limited or no knowledge of the model parameters and can only interact with the model. Black-box attacks are commonly divided into transfer-based and query-based attacks. Transfer-based attacks generate adversarial examples on a substitute model and then apply them to the target model, relying on the transferability property of adversarial examples. Query-based attacks directly estimate gradients or decision boundaries by repeatedly querying the target model. 

%\todo{rewrite, e.g., rain in highway camera to mislead...}

\subsection{Gradient-based Attacks on Classification Task}\label{sec:aeonclassification}

Gradient-based attacks are the most widely studied adversarial attacks in classification tasks. These attacks exploit the gradient of the model loss with respect to the input image to identify the direction in which the input should be perturbed to maximize the loss function. Since they rely on access to model gradients, they are typically studied in white-box settings.

One of the earliest gradient-based methods is the Fast Gradient Sign Method (FGSM), which generates adversarial examples in a single step by perturbing the input in the direction of the sign of the gradient \citep{goodfellowExplainingHarnessingAdversarial2014}. Iterative variants, such as the Basic Iterative Method (BIM) and Projected Gradient Descent (PGD), repeatedly apply small perturbations and project the perturbed sample back into a valid perturbation region, making them more effective than single-step attacks \citep{madryDeepLearningModels2018}. Another widely used attack is the Carlini and Wagner (C\&W) attack, which formulates adversarial example generation as an optimization problem and can produce highly effective and often imperceptible perturbations \citep{carliniEvaluatingRobustnessNeural2017}.

\subsection{Transfer-based and Query-based Adversarial Attacks on Classification Task}
As explained, transfer-based attacks are black-box attacks in which adversarial examples are generated on a surrogate model and then transferred to a target model. Their success relies on the transferability property of adversarial examples, meaning that perturbations crafted for one model can often fool other models as well \citep{aghabagherlooUnveilingIllusionaryRobust2025}. These attacks do not require access to the target model’s gradients; instead, attackers can train a substitute model and use white-box attacks such as FGSM or PGD to craft transferable adversarial examples.

%Several methods have been proposed to improve transferability. Momentum-based attacks stabilize the perturbation direction across iterations and improve transfer success \citep{dongBoostingAdversarialAttacks2018}. Ensemble-based attacks generate adversarial examples using multiple surrogate models to increase their generalization to unseen target models \citep{liuDelvingTransferableAdversarial2017}.

Query-based attacks are another type of black-box attack in which the adversary interacts directly with the target model through repeated queries. These attacks estimate gradients or decision boundaries using only the model outputs. ZOO is one of the most well-known query-based attacks and estimates gradients using finite differences \citep{chenZooZerothOrder2017}.  The other well-known attack is Square Attack, which aims to reduce the number of required queries \citep{andriushchenko2020square}.

%Other methods include NES, which uses random search to estimate gradients more efficiently \citep{ilyasBlackboxAdversarialAttacks2018}, Boundary Attack, which iteratively reduces adversarial perturbations while preserving misclassification \citep{brendelDecisionBasedAdversarialAttacks2018}, and Square Attack, which aims to reduce the number of required queries \citep{andriushchenko2020square}.

To defend against these attacks, several studies have proposed adversarial training and other robustification techniques \citep{wuDefensesAdversarialMachine2023}. However, subsequent work has demonstrated that many defenses remain vulnerable when the attacker knows the defense mechanism itself \citep{aghabagherlooBrittlenessRobustFeatures2023, athalyeObfuscatedGradientsGive2018}. This has led to a recurring cycle in the literature, where new defenses are proposed and later bypassed by stronger attacks.

\subsection{Robustness and Reinforcement Learning}
Although RL’s contribution to robustness has been rarely explored in classification, prior studies report that RL can yield robust behavior in control and robotics, supported by worst-case optimization viewpoints that treat environment uncertainty explicitly during learning~\citep{pintoRobustAdversarialReinforcement2017,rajeswaranEPOptLearningRobust2017, nilimRobustControlMarkov2005,wiesemannRobustMarkovDecision2013,dermanDistributionalRobustnessRegularization2020}. 

% In control and robotics, training an agent together with an explicit disturbance opponent, or repeatedly training on the hardest trajectories drawn from a diverse set of environment settings, improves stability and transfer to shifted dynamics~\cite{pinto2017rarl,rajeswaran2017epopt}; these practices are supported by worst-case optimization viewpoints that treat environment uncertainty explicitly during learning~\cite{nilim2005robust,wiesemann2013robust,derman2020dro}. 

For input perturbations more similar to evasion attacks, a lightweight sensitivity penalty discourages large output changes when the input is changed slightly. Prior work demonstrates consistent robustness gains across common algorithms with minimal loss in clean performance, while placing stronger emphasis on early decisions further mitigates behavior drift over time~\citep{zhangRobustDeepReinforcement2020,yamabeRobustDeepReinforcement2024}. Beyond pixel changes, multi-agent studies demonstrate that an opponent can steer a victim policy into harmful behavior without modifying pixels directly, underscoring the need to test opponent-driven threats as well~\citep{gleaveAdversarialPoliciesAttacking2020}. For safety-critical cases, online selection rules that prefer actions remaining good under bounded input noise have been shown to improve resilience and come with simple certificates~\citep{lutjensCertifiedAdversarialRobustness2020}. Evidence outside control also points in the same direction: an RL-style generator–classifier training improves robustness to lexical substitutions in text classification~\citep{xuLexicalATLexicalBasedAdversarial2019}, and RL-based sequential feature acquisition improves resilience when the model must decide which features to read before predicting~\citep{janischClassificationCostlyFeatures2020}. However, despite these advances in adjacent areas, a systematic comparison where RL serves as the \textbf{primary} training paradigm for adversarially robust \textbf{image classification}, under standard white-box and black-box attacks and matched training budgets, remains limited.

\section{Threat Model}\label{sec:threat_model}
We define our threat model following the conventions of adversarial machine learning in security-critical settings. We specify the target system, adversary goals, adversary capabilities under three distinct attack models, and the scope of our analysis.

\subsection{Target System}
We consider a neural network image classifier deployed for inference. The model takes an input image $X$ and outputs a predicted class label $\hat{y} = f_\theta(X)$. We study four model variants that share the same architecture but differ in their training paradigm: 
\begin{itemize}
    \item \textbf{SL}: trained via supervised learning (cross-entropy loss);
    \item \textbf{SL-adv}: trained via supervised learning with adversarial training (TRADES);
    \item \textbf{RL}: trained via reinforcement learning (policy gradient with $\varepsilon$-greedy exploration);
    \item \textbf{RL-adv}: trained via reinforcement learning with adversarial training (TRADES).
\end{itemize}
No additional defense mechanisms (e.g., input preprocessing, randomized smoothing, or ensemble methods) are applied beyond the training procedure itself. This allows us to isolate the effect of the training paradigm on the adversarial attack surface.

\subsection{Adversary Goal}
The adversary aims to cause misclassification by crafting an adversarial example $X'$ from a clean input $X$ such that the model predicts an incorrect label, i.e., $f_\theta(X') \neq y$, while keeping the perturbation imperceptible. We focus on untargeted attacks under $\ell_p$-norm-bounded constraints:
\begin{equation}
    \|X' - X\|_p \leq \varepsilon_{\text{adv}},
\end{equation}
where $p \in \{2, \infty\}$ and $\varepsilon_{\text{adv}}$ is the perturbation budget specified per dataset (see Section~\ref{sec:implementation} for exact values).

\subsection{Adversary Capabilities}

We evaluate robustness under three adversary types with progressively restricted access to the target model. For each type, we specify what the adversary knows and what the adversary can do.

\paragraph{Gradient-based adversary (white-box).}
The adversary has full knowledge of the target model, including the architecture, trained parameters $\theta$, and the training paradigm (SL or RL). The adversary can compute exact gradients $\nabla_x \mathcal{L}(f_\theta(x), y)$ with respect to the input and perform iterative optimization to craft adversarial examples. This is the strongest attack setting we consider. Representative attacks include PGD~\cite{madryDeepLearningModels2018} and Auto-PGD (APGD)~\cite{croce2020reliable}. This adversary type directly tests our central hypothesis: whether RL-induced gradient properties degrade the effectiveness of gradient-based optimization.

\paragraph{Transfer-based adversary (black-box, surrogate access).}
The adversary does not have access to the target model's parameters or gradients, but can train a surrogate model on the same dataset. In our evaluation, the surrogate model shares the same architecture as the target but is trained under a different paradigm (e.g., SL surrogate attacking an RL target, or vice versa). The adversary generates adversarial examples on the surrogate using white-box attacks and applies them to the target, exploiting the transferability of adversarial examples across models~\cite{aghabagherlooImpactDataDuplication2025}. This setting is particularly relevant to our analysis: if an adversary knows the target is RL-trained and thus resistant to direct gradient attacks, a natural adaptive strategy is to craft adversarial examples on an SL-trained surrogate instead.

\paragraph{Query-based adversary (black-box, query access only).}
The adversary can submit inputs to the target model and observe the output predictions (class labels or confidence scores), but has no access to model parameters, gradients, or training details. The adversary estimates adversarial directions through repeated queries. We adopt the Square Attack~\cite{andriushchenko2020square}, a query-efficient method that does not rely on gradient estimation but instead performs randomized search within the $\ell_\infty$ perturbation budget. This adversary type tests whether the robustness observed under gradient-based attacks extends to settings where the attacker bypasses gradients entirely.

\subsection{Scope}

Our primary goal is to understand how training paradigms shape the adversarial attack surface and to identify the mechanism by which RL-based training disrupts gradient-based adversarial optimization. Based on this understanding, we further demonstrate that combining RL with adversarial training (RL-adv) provides a dual-layer defense that outperforms standard adversarial training alone. 

% \subsection{Key Observation}

% Our results reveal that RL-based training significantly reduces the effectiveness of gradient-based attacks but does not prevent transfer-based attacks. This suggests that disrupting optimization does not eliminate adversarial regions in the input space.

\section{Methodology}\label{sec:methodology}
%\ali{I just have a general comment on the paper, but this is just my suggestion. I think it would be better if you structure the paper like a story, where each part contributes to the overall narrative. Something like: 1-First, present the problem statement (and maybe include some numbers showing that previous robustification methods exhibit susceptibility, adding results that demonstrate this vulnerability). 2-Explain why you think employing RL is helpful. 3-Present the experimental results — a really detailed explanation is necessary here, for example: RL in this scenario with epsilon equal to 0.5 showed this, but with epsilon equal to 1 showed something different, and explain why. 4-Add a theoretical framework that covers and supports the experimental results.}

%\todo{(RL classification image, we are the first person to use the reinforcement learning-based method for image classification with adversarial training. This should be clear.)} 
In this section, we present the training paradigms, adversarial training procedure, attack methods, and gradient analysis framework used in our study. We introduce two training paradigms, i.e., supervised learning (SL) and reinforcement learning (RL), and formulate classification under each. We then describe how adversarial training is incorporated into both paradigms under identical configurations to ensure a fair comparison. Finally, we define the adversarial attacks used for evaluation and the analytical tools used to characterize gradient-level differences between SL and RL.

\subsection{Training Paradigms}\label{sec:training_paradigms}
\subsubsection{Supervised Learning}\label{sec:sl}
In supervised learning, the model $f_\theta(X)$ maps an input image $X$ to a probability distribution over $C$ predefined classes~\citep{krizhevskyImagenetClassificationDeep2012,heDeepResidualLearning2016,dosovitskiyImageWorth16x162020}. The parameters $\theta$ are optimized by minimizing the cross-entropy loss:
\begin{equation}
    \mathcal{L}_{CE} = -\sum_{c=1}^{C} \mathbb{I}(y = c) \cdot \log \hat{p}_c,
\end{equation}
where $y$ is the ground-truth label, $\hat{p} = f_\theta(X)$ is the predicted probability vector, and $\mathbb{I}(\cdot)$ is the indicator function. The overall objective is:
\begin{equation}
    \theta^* = \arg\min_\theta \mathbb{E}_{(X,y) \sim \mathcal{D}} [\mathcal{L}_{CE}(f_\theta(X), y)].
\end{equation}
Optimization is performed via backpropagation and gradient descent. A key property of this formulation is that each training sample $(X, y)$ provides a deterministic gradient signal: given a fixed input and label, the loss and its gradient corresponding to the input are fully determined.

\subsubsection{Reinforcement Learning}\label{sec:rl}
We formulate classification as a one-step policy learning problem. The model defines a policy $\pi_\theta(a \mid X)$ over class labels, where the state is the input image $X$ and the action $a$ is the predicted class. For each input, an action $a_t$ is sampled from the policy, and a reward $r_t$ is assigned as $r_t = \mathbb{I}(a_t = y)$. The policy is optimized using a REINFORCE-style objective~\cite{williams1992simple}:
\begin{equation}
    \mathcal{L}_{PG} = -\mathbb{E}_{a \sim \pi_\theta} [\log \pi_\theta(a_t \mid X) \cdot \mathbb{I}(a_t = y)].
\end{equation}
This objective increases the likelihood of actions that yield positive rewards. To encourage sufficient exploration, we adopt an $\varepsilon$-greedy strategy: with probability $\varepsilon$, the action is sampled uniformly at random; otherwise, it is sampled from the policy distribution $\pi_\theta(a \mid X)$.

\subsubsection{Structural difference in optimization signals.}\label{sec:sl_rl_difference}
A key distinction between SL and RL shows in the nature of the gradient signal used for parameter updates. In SL, the cross-entropy loss provides a \textbf{deterministic} gradient for each sample: given a fixed $(X, y)$, the gradient $\nabla_\theta \mathcal{L}_{CE}$ is uniquely determined. In RL, the gradient signal is inherently \textbf{stochastic} for two reasons: (1)~the action $a_t$ is sampled from $\pi_\theta$ rather than deterministically assigned, meaning that different samples of $a_t$ yield different gradient directions even for the same input; and (2)~the $\varepsilon$-greedy exploration mechanism further randomizes action selection, introducing additional variance into the gradient estimate. As a result, RL parameter updates are driven by a noisier, higher-variance and less smooth gradient signal compared to SL. Whether such differences in optimization gradient affect adversarial robustness is the central empirical question investigated in Sections~\ref{sec:results}, ~\ref{sec:analysis} and~\ref{sec:discussion}.

\subsection{Adversarial Training}\label{sec:adv_training}
To study robustness under adversarial settings, we incorporate adversarial training into both SL and RL. Importantly, both SL-adv and RL-adv use an identical adversarial training procedure; the only difference between them is the base training paradigm (cross-entropy vs.\ policy gradient). This design ensures that any robustness difference between SL-adv and RL-adv can be attributed to the training paradigm itself, rather than to differences in the adversarial training configuration. Adversarial examples are generated using FGSM and incorporated into training as data augmentation:
\begin{equation}
    X_{\text{adv}} = \text{clip}(X + \epsilon \cdot \text{sign}(\nabla_X \mathcal{L}),\; 0,\; 1),
\end{equation}
where $\epsilon$ denotes the perturbation budget.
 
Additionally, we integrate adversarial examples into the TRADES framework~\cite{zhangTheoreticallyPrincipledTradeoff2019}, which explicitly balances standard accuracy and robustness:
\begin{equation}
\label{eq:trades}
\begin{aligned}
\theta^{*} = \arg\min_{\theta} \mathbb{E}_{(X,y)} \Big\{ 
& \underbrace{\mathcal{L}(f_\theta(X), y)}_{\text{Standard Accuracy}} \\
&+ \beta \cdot \underbrace{
\max_{I_{adv} \in \mathbb{B}(X,\epsilon)}
\mathcal{L}(f_\theta(X), f_\theta(X_{adv}))
}_{\text{Robustness Regularizer}}
\Big\}
\end{aligned}
\end{equation}
where $\beta$ controls the trade-off between accuracy and robustness. In our setting, $\beta$ is scheduled during training: smaller values emphasize accuracy in early stages, while larger values improve robustness later (see Section~\ref{sec:implementation} for exact scheduling).

\subsection{Adversarial Attacks}\label{sec:attacks}
We evaluate robustness under three attack types, corresponding to the adversary types defined in our threat model (Section~\ref{sec:threat_model}).
 
\subsubsection{Gradient-based attack.}
PGD~\cite{madryDeepLearningModels2018} is a strong gradient-based attack that iteratively updates adversarial examples within an $\ell_p$-bounded constraint:
\begin{equation}
X^{t+1} = \Pi_{X+\mathcal{S}} \left( X^t + \alpha \cdot \phi(\nabla_X J(\theta, X^t, y)) \right)
\end{equation}
where $X^t$ is the adversarial example at iteration $t$, $\alpha$ is the step size, $\Pi_{X+\mathcal{S}}$ denotes projection onto the perturbation set $\mathcal{S} = \{\delta \mid \|\delta\|_p \leq \epsilon\}$, and $\phi(\cdot)$ denotes the norm-dependent step direction: $\phi(g) = \mathrm{sign}(g)$ for $\ell_\infty$ and $\phi(g) = g / \|g\|_2$ for $\ell_2$. We additionally use AutoAttack~\cite{croce2020reliable}, a standardized ensemble of complementary gradient-based attacks (APGD-CE, APGD-T), as a stronger evaluation baseline.
 
\subsubsection{Transfer-based attack.}
Adversarial examples are generated on a surrogate model using white-box attacks (PGD) and then applied to the target model. In our setup, surrogate and target models share the same architecture but differ in training paradigm, allowing us to evaluate whether adversarial examples generated under one paradigm transfer effectively to models trained under a different paradigm.
 
\subsubsection{Square-based attack.}
Square Attack~\citep{andriushchenko2020square} is a query-efficient black-box attack that perturbs the input by iteratively modifying randomly selected square regions within the $\ell_\infty$ budget. It does not rely on gradient information, providing a complementary evaluation of robustness in settings where the attacker bypasses gradients entirely.

\subsection{Gradient Analysis Framework}\label{sec:gradient_analysis_framework}
To investigate \textbf{why} different training paradigms lead to different robustness results, we introduce a set of analytical tools that characterize the gradient structure of trained models. These tools are applied in Sections~\ref{sec:analysis} and~\ref{sec:discussion} to support the interpretation of our empirical results. We organize them into three categories: loss geometry visualization, static gradient indicators, and dynamic gradient indicators. Formal definitions are provided in Appendix~\ref{app:robustness_indicators}.
 
\subsubsection{Loss geometry visualization.}
Decision-boundary diagrams visualize classification regions under adversarial perturbations, revealing the sharpness or flatness of boundaries~\citep{fawziEmpiricalStudyTopology2018,moosavi-dezfooliDeepfoolSimpleAccurate2016}. Loss-landscape diagrams plot the scalar loss along the adversarial attack direction and a perpendicular direction, making the gradient magnitude visually interpretable~\citep{liVisualizingLossLandscape2018,liuLossLandscapeAdversarial2020}.
 
\subsubsection{Static gradient indicators.}
These indicators characterize the gradient field of a trained model \emph{before any attack is applied}:
\begin{itemize}
    \item \textbf{Average Gradient Norm (AGN)}: measures the expected magnitude of input gradients $\|\nabla_x \mathcal{L}(x, y)\|_2$ over the data distribution~\citep{moosavi-dezfooliDeepfoolSimpleAccurate2016}. Larger AGN indicates greater sensitivity to input perturbations.
    \item \textbf{Input-Gradient Variance (IGV)}: captures the variance of the input gradient under small Gaussian perturbations of the input~\citep{wangEnhancingTransferabilityAdversarial2021,agarwalEstimatingExampleDifficulty2022}. Higher IGV indicates larger and more variable gradient magnitudes.
    \item \textbf{Directional Input-Gradient Variance (dIGV)}: measures the variability of the \emph{direction} (not magnitude) of the input gradient under perturbations~\cite{liuEnhancingGeneralizationUniversal2023,dengEnhancingTransferabilityTargeted2023}. Higher dIGV indicates less consistent gradient directions, making it harder for an attacker to identify a stable adversarial direction.
\end{itemize}
 
\subsubsection{Dynamic gradient indicators.}
These indicators track how gradients evolve \emph{during} iterative adversarial optimization:
\begin{itemize}
    \item \textbf{Gradient Stability Under Attack (GSUA)}: records the cosine similarity between consecutive attack gradients across PGD iterations. High GSUA indicates a coherent ascent direction favorable to the attacker; low or negative GSUA signals rapidly changing directions that hinder attack convergence.
    \item \textbf{Gradient norm trajectory}: tracks the $\ell_2$ norm of the attack gradient across iterations, separating directional instability from changes in gradient scale.
\end{itemize}
 
\subsubsection{Predictive entropy.}
Mean predictive entropy $H$ summarizes the dispersion of the model's output distribution under clean and perturbed inputs~\cite{smithUnderstandingMeasuresUncertainty2018,kopetzkiEvaluatingRobustnessPredictive2021,qinImprovingCalibrationRelationship2021,emdeCertificationUncertaintyCalibration2024}. Higher entropy under attack indicates that the model retains uncertainty rather than producing overconfident (but incorrect) predictions.

\section{Implementation}\label{sec:implementation}

This section describes the experimental implementation used in our evaluation. We present the model architectures, datasets, training settings, attack configurations, and evaluation metrics used to compare SL, RL, SL-adv, and RL-adv under a consistent setup.

\subsection{Models}\label{sec:models}
To evaluate robustness across architectures of varying complexity, we use three neural network models: a 4-layer CNN, a 6-layer CNN, and ResNet-18~\cite{heDeepResidualLearning2016}. The architectures of the 4-layer and 6-layer CNNs are illustrated in Figure~\ref{fig:4_and_6_layer_CNN}. 

\begin{figure}[htbp]
    \centering
    \includegraphics[width=1.0\linewidth]{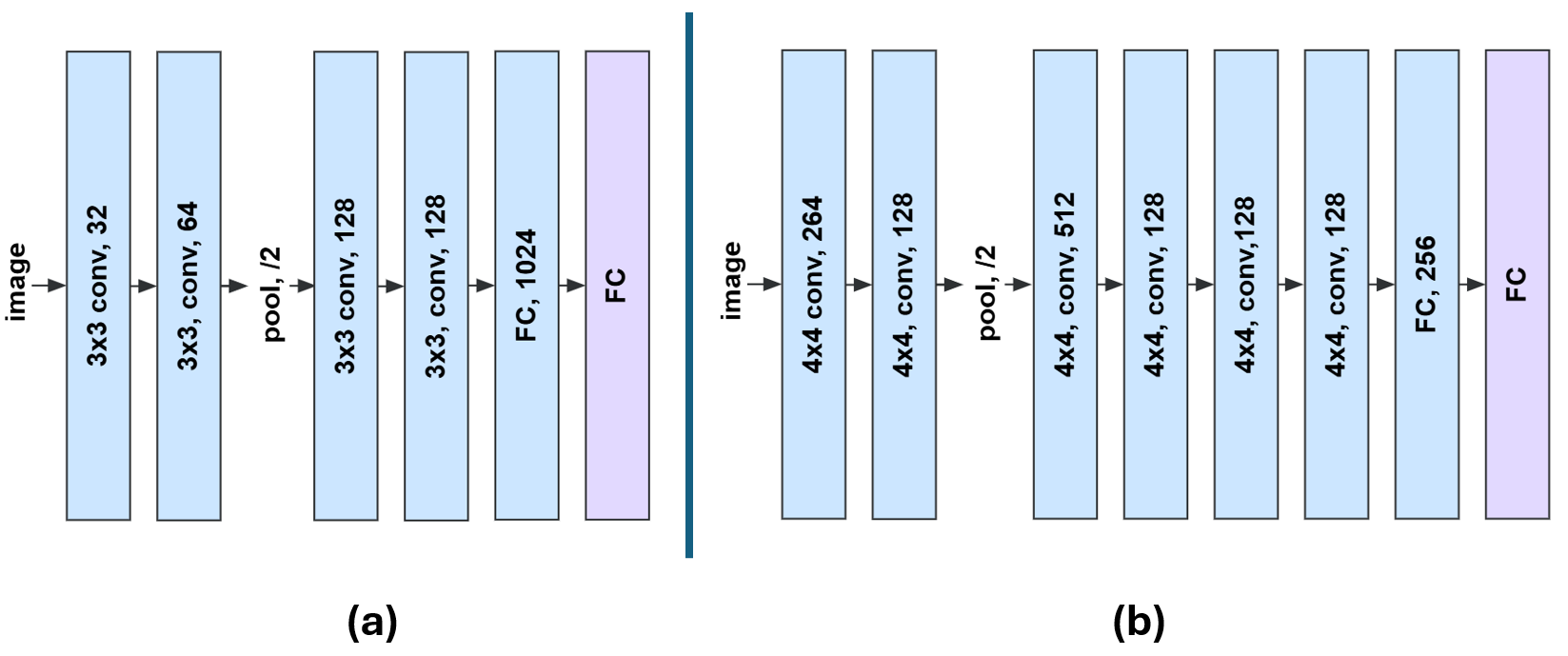}
    \caption{Model architectures of the (a) 4-layer CNNs and (b) 6-layer CNNs used for evaluation.}
    \label{fig:4_and_6_layer_CNN}
\end{figure}

\subsection{Datasets}\label{sec:datasets}
We conduct experiments on three benchmark datasets: CIFAR-10~\cite{krizhevskyLearningMultipleLayers2009}, CIFAR-100~\cite{krizhevskyLearningMultipleLayers2009}, and ImageNet-100~\cite{tianContrastiveMultiviewCoding2020}, as shown in Table~\ref{tab:dataset_stats}.

\begin{table}[htbp]
\caption{Benchmark dataset specifications.}
\label{tab:dataset_stats}
\begin{center}
\begin{tabular}{llll}
\toprule
\multicolumn{1}{l}{\bf Dataset} & 
\multicolumn{1}{l}{\bf Train/Test samples} & 
\multicolumn{1}{l}{\bf Image Size} & 
\multicolumn{1}{l}{\bf Classes} \\ 
\hline 
CIFAR-10    & 50,000/10,000 & 32$\times$32$\times$3 & 10 \\
CIFAR-100   & 50,000/10,000 & 32$\times$32$\times$3 & 100 \\
ImageNet-100 & 126,689/5,000 & 224$\times$224$\times$3 & 100 \\
\bottomrule
\end{tabular}
\end{center}
\end{table}

\subsection{Training Configuration}\label{sec:trainconf}
All models are trained under both SL and RL settings with consistent architectures and batch sizes to ensure a fair comparison.
 
\textbf{Supervised Learning (SL).}
We use the Adam optimizer with an initial learning rate of 0.005, decayed to 0.0005 during the final 20 epochs. Each model is trained for 100 epochs with a batch size of 256.
 
\textbf{Reinforcement Learning (RL).}
We use the Adam optimizer with a learning rate of 0.005. Due to the higher variance in policy optimization, models are trained for 2000 epochs to ensure convergence. The batch size is set to 256, and an $\varepsilon$-greedy exploration strategy is adopted with $\varepsilon = 0.5$.
 
\textbf{Adversarial Training (SL-adv/RL-adv).}
Perturbations are generated using FGSM with a budget of $\epsilon = 3/255$. We adopt the TRADES framework with a scheduled regularization parameter: $\beta = 1$ in early training (prioritizing standard accuracy), gradually increasing to $\beta = 6$ (enhancing robustness).
 
% \paragraph{Computational cost.}
% RL training requires approximately $20\times$ more epochs than SL (2000 vs.\ 100) to achieve convergence. On a single NVIDIA A100 GPU, training a 6-layer CNN on CIFAR-10 takes approximately 20 minutes for SL and 200 minutes hours for RL. This computational overhead is a practical limitation discussed in Section~\ref{sec:limitation_future_work}.
 
% \paragraph{Reproducibility.}
% To account for optimization stochasticity, we train models with three independent random seeds $\{10, 20, 42\}$ and report mean results. The seed for adversarial attack evaluation is fixed to 33 across all experiments to ensure consistent comparison.

\subsection{Attack Configuration}\label{sec:attackconf}
In this study, we primarily focus on gradient-based attacks, while also considering transfer-based and query-based attacks to provide a comprehensive evaluation of robustness under different threat settings.

\textbf{Gradient-based attacks.}
We use Projected Gradient Descent (PGD) as the primary attack, implemented using the CleverHans library~\citep{papernotTechnicalReportCleverHans2018}. The attack is configured with a perturbation budget of \(\epsilon = 7.0\), 250 iterations, and a step size of \(\epsilon_{\text{iter}} = 0.3\). All input images are normalized to the $[0, 1]$ range, and the attack is conducted under an \(\ell_{2}\) norm constraint with a batch size of 256. We adopt a large perturbation budget to test model robustness under a strong attack regime; results across smaller budgets ($\epsilon \in \{0.25, 0.5, 1.0, 2.0, 3.5, 5.0\}$) are provided in Appendix~\ref{app:perturbation_budget} and confirm that the RL--SL gap holds consistently. Additionally, we include AutoAttack~\citep{croce2020reliable} as a stronger evaluation baseline. AutoAttack consists of multiple complementary attacks, including APGD-CE, APGD-T~\citep{croce2020reliable}, and FAB-T~\citep{croce2020minimally}, providing a more reliable assessment of model robustness.

\textbf{Transfer-based attacks.}
For transfer attacks, adversarial examples are generated from models trained under different settings (SL, RL, SL-adv, and RL-adv) using the same dataset and model architecture. These adversarial examples are then applied to other models to evaluate cross-model transferability. This setup allows us to examine whether adversarial examples generated under one training paradigm remain effective against models trained under different paradigms.

\textbf{Query-based attacks.}
We adopt the SQUARE attack~\citep{andriushchenko2020square}, a query-efficient black-box attack, to evaluate robustness in settings where gradients are not accessible. This allows us to further analyze model behavior under strict black-box attack scenarios.

\subsection{Evaluation Metrics}

We use classification accuracy under adversarial attacks as the primary metric to evaluate model robustness. Higher accuracy indicates stronger robustness against adversarial perturbations. The accuracy is computed as:
\begin{equation}
\text{Accuracy} = \frac{1}{N} \sum_{i=1}^{N} \mathbb{I}\big(f_{\theta}(X_i) = y_i\big),
\end{equation}
where \(N\) is the number of samples, \(X_i\) and \(y_i\) denote the input and its ground-truth label, respectively, and \(\mathbb{I}(\cdot)\) is the indicator function that equals 1 if the prediction is correct and 0 otherwise. For adversarial evaluation, \(X_i\) is replaced by the adversarial example \(X_i^{adv}\).

% \ali{I still think 6 pages for intro+ primarily concepts is a bit a lot, but if you believe it is necessary, keep it as it is }

%as measured by the Attack Success Rate (ASR):
%\begin{equation}
%\text{ASR} = \frac{1}{N}\sum_{i=1}^N \mathbb{I}(f_\theta(\mathbf{X}_i') \neq y_i)
%\end{equation}

%where \(\mathbf{X}_{i}'\) denotes the adversarial example crafted from input \(\mathbf{X}_{i}\), and \(\mathbb{I}\) is the indicator function. The \(\mathbf{\ell}_2\)-norm bound \(\epsilon\) ensures perturbation imperceptibility.

\section{Robustness Evaluation}\label{sec:results}
We evaluate the robustness of SL- and RL-trained models across three attack settings: gradient-based, transfer-based, and query-based. All four model variants (SL, SL-adv, RL, RL-adv) are evaluated on CIFAR-10, CIFAR-100, and ImageNet-100. The main phenomena are observed consistently across all three backbone architectures (4-layer CNN, 6-layer CNN, ResNet-18); we focus on the 6-layer CNN in the main text and report complete results in Appendix~\ref{app:model_performance}. Figure~\ref{fig:adv_examples} illustrates that the adversarial perturbations used in our evaluation remain imperceptible to human observers (e.g., CIFAR-10: true label ``truck'', predicted label before attack ``truck'', predicted label after attack ``ship'').

\begin{figure}[htbp]
    \centering
    \includegraphics[width=1.0\linewidth]{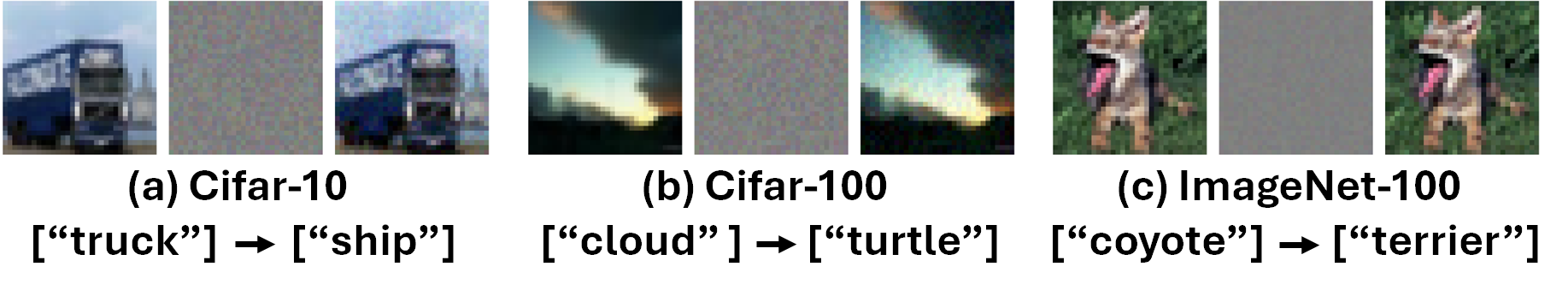}
    \caption{Visualization of adversarial examples attacked on 6-layer-CNN-SL across benchmark datasets: (a) CIFAR-10 example showing original image (left), additive perturbation (middle), and adversarial image (right); (b) CIFAR-100 example; (c) ImageNet-100 example.}
    \label{fig:adv_examples}
\end{figure}

\subsection{Gradient-Based Attacks}\label{sec:gradient_based_attack}

We first evaluate robustness under gradient-based adversaries, which construct adversarial examples by directly optimizing perturbations using model gradients. This corresponds to the primary attack mechanism targeted in our analysis.

\paragraph{PGD attack.}
Table~\ref{tab:model_performance_across_datasets} reports clean accuracy and adversarial accuracy under PGD for the 6-layer CNN across all three datasets. Two key observations emerge. (1) First, RL and RL-adv achieve clean accuracy only 3--5\% lower than SL and SL-adv, confirming that the RL formulation achieves competitive classification performance. (2) Second, under PGD attack, RL-trained models substantially outperform SL-trained models: adversarial accuracy increases from approximately 5\% to 56\% on CIFAR-10, from 3\% to 13\% on CIFAR-100, and from 6\% to 18\% on ImageNet-100. This suggests that gradient-based optimization struggles to find effective adversarial directions on RL-trained models.

\begin{table}[htbp]
\caption{Model robustness evaluation (\%) across datasets.}
\label{tab:model_performance_across_datasets}
\centering
\resizebox{\linewidth}{!}{
\begin{tabular}{lcccccc}
\toprule
 & \multicolumn{2}{c}{\bf CIFAR-10} & \multicolumn{2}{c}{\bf CIFAR-100} & \multicolumn{2}{c}{\bf ImageNet-100} \\
\cline{2-7}
\bf Model & \bf Clean & \bf AE & \bf Clean & \bf AE & \bf Clean & \bf AE \\ 
\hline
6-layer-CNN-SL & \bf 90.74* & 5.00 & \bf 64.75* & 2.53 & 57.64 & 5.72 \\
6-layer-CNN-SL-adv & 90.11 & 4.96 & 63.61 & 2.83 & \bf 58.00* & 5.36 \\
6-layer-CNN-RL & 88.50 & \bf 55.77* & 59.80 & 13.06 & 55.60 & 18.04 \\
6-layer-CNN-RL-adv & 87.63 & 48.63 & 56.54 & \bf 25.51* & 45.92 & \bf 18.24* \\
\hline
\end{tabular}
}
\end{table}

\paragraph{AutoAttack.}
Table~\ref{tab:aa_result_6layercnn_cifar10} shows AutoAttack results on CIFAR-10. Under this stronger evaluation, the advantage of plain RL over SL largely disappears: RL achieves 16.71\% under APGD-CE compared to 15.42\% for SL. This is expected because AutoAttack employs adaptive loss optimization that can discover adversarial directions even when gradients are unreliable. However, RL-adv consistently achieves the highest robustness across all AutoAttack components (36.27\% under APGD-CE vs.\ 24.87\% for SL-adv; 35.4\% under APGD-T and FAB-T vs.\ 21.77\% for SL-adv), suggesting that the combination of RL-induced gradient disruption and adversarial training provides complementary benefits that persist even under strong adaptive attacks.

\begin{table}[htbp]
\centering
\caption{Model robustness evaluation (\%) in CIFAR-10 dataset by AutoAttack.}
\label{tab:aa_result_6layercnn_cifar10}

\resizebox{\linewidth}{!}{
\begin{tabular}{lccccc}
\toprule
 & \textbf{Clean} & \textbf{APGD-CE} & \textbf{APGD-T} & \textbf{FAB-T} & \textbf{SQUARE} \\
\midrule
6-layer-CNN-SL      & \textbf{90.82*} & 15.42   & 13.58  & 13.58 & 11.6   \\
6-layer-CNN-SL-adv & 87.03 & 24.87   & 21.77  & 21.77 & 19.55   \\
6-layer-CNN-RL       & 88.62 & 16.71   & 13.73  & 13.73 & 11.96   \\
6-layer-CNN-RL-adv & 86.29 & \textbf{36.27*} & \textbf{35.4*} & \textbf{35.4*} & \textbf{32.41*} \\
\bottomrule
\end{tabular}
}
\end{table}

\subsection{Transfer-Based Attacks}\label{sec:transfer_based_attack}
We next evaluate robustness under transfer-based adversaries, where adversarial examples are generated on a surrogate model and then applied to a target model. Unlike gradient-based attacks, this setting does not rely on access to the target model’s gradients and instead exploits shared decision boundary structures across models. Table~\ref{tab:model_performance_across_models} reports model accuracy on adversarial examples generated from different source models. A clear asymmetry emerges: adversarial examples crafted on SL-based models transfer effectively to all target models, reducing accuracy to below 10\% in most cases. In contrast, adversarial examples generated from RL-based models are significantly less transferable, with all target models maintaining above 40\% accuracy. This asymmetry has two important implications. (1) First, it confirms that adversarial examples are substantially harder to generate via gradient-based optimization on RL models, which is consistent with the PGD results above. (2) Second, it reveals a critical limitation: despite resisting direct gradient attacks, RL-trained models remain vulnerable to adversarial examples transferred from SL surrogates. This indicates that RL training does not eliminate adversarial regions in the input space; it primarily disrupts the optimization process used to discover them. However, RL-adv addresses this limitation, achieving 30.88\% accuracy against SL-sourced adversarial examples (vs.\ 8.29\% for plain RL), demonstrating that combining adversarial training with RL-based optimization is necessary for complete robustness.

\begin{comment}
    Table~\ref{tab:model_performance_across_models} shows model accuracy on adversarial examples generated from different source models. We observe a clear asymmetry: adversarial examples crafted on SL-based models (SL and SL-adv) transfer effectively to all target models, reducing accuracy to below 10\% in most cases. In contrast, adversarial examples generated from RL-based models (RL and RL-adv) are significantly less transferable, with all models maintaining above 40\% accuracy. This asymmetry indicates that adversarial examples are substantially easier to generate on SL models, while RL models exhibit a hard-to-optimize property under direct gradient-based attacks. However, despite this difficulty, RL-trained models remain vulnerable to adversarial examples transferred from SL models. This suggests that RL training does not eliminate adversarial regions in the input space. Instead, it primarily disrupts the optimization process used to discover such regions. Once adversarial directions are identified on a surrogate model, they remain effective when transferred to RL-trained models. Plain RL provides resistance to adversarial examples generated from itself, but it does not prevent transfer-based attacks. Therefore, adversarial training is needed for RL. RL with adversarial training provides robustness against both strong (RL: 54.31\% / 46.91\%) and weak (SL: 30.88\% / 22.56\%) adversarial sources. This suggests that combining RL with adversarial training improves robustness beyond only optimization difficulty.

\end{comment}

\begin{table}[htbp]
\centering
\caption{Transfer attack robustness (\%) on CIFAR-10 using 6-layer CNN. Rows: target models. Columns: source models used to generate adversarial examples.}
\label{tab:model_performance_across_models}
\small
\renewcommand{\arraystretch}{0.9}
\resizebox{\linewidth}{!}{
\begin{tabular}{lcccc}
\toprule
\addlinespace[2pt]
& \multicolumn{4}{c}{\textbf{Adversarial Examples (AE)}} \\
\cline{2-5}
\textbf{Model} & \textbf{SL} & \textbf{SL-adv} & \textbf{RL} & \textbf{RL-adv} \\ 
\midrule
6-layer-CNN-SL      & 5.81 & 7.18 & 48.45 & 43.15  \\
6-layer-CNN-SL-adv  & 9.53 & 5.74 & 50.14 & 44.86  \\ 
6-layer-CNN-RL      & 8.29 & 7.92 & 48.84 & 42.69  \\
6-layer-CNN-RL-adv  & \textbf{30.88*} & \textbf{22.56*} & \textbf{54.31*} & \textbf{46.91*}  \\
\bottomrule
\end{tabular}
}
\end{table}

\subsection{Query-Based Attacks}\label{query_based_attacks}
We briefly evaluate robustness under query-based adversaries using the Square Attack, which does not rely on gradient information and instead approximates adversarial directions through model queries. Under Square Attack as shown in Table~\ref{tab:aa_result_6layercnn_cifar10}, RL-trained models show no significant advantage over SL-trained models (11.96\% vs.\ 11.6\%), confirming that RL's robustness gains are specific to gradient-dependent attacks. However, RL-adv again achieves the highest robustness (32.41\% vs.\ 19.55\% for SL-adv), suggesting that the flatter loss landscape induced by RL reduces the efficiency of query-based search even when gradient information is unavailable.

\begin{comment}
    Results in Table~\ref{tab:aa_result_6layercnn_cifar10} show that RL-trained models do not exhibit a significant advantage over SL-trained models under Square Attack. This indicates that the gains observed under gradient-based attacks do not directly extend to gradient-free settings, since RL disrupts optimization but does not eliminate adversarial regions. In contrast, RL-adv achieves higher robustness than SL-adv. This suggests that adversarial training improves decision boundary stability, while RL further lowers the efficiency of query-based search by making the loss landscape flatter and more irregular.
\end{comment}

% \subsection{Pareto Analysis: Clean–Robust Trade-off}
% To assess whether robustness improvements come at the expense of clean accuracy, we construct clean–robust Pareto frontiers for 6-layer CNNs by varying $\epsilon$ and $\beta_{\text{adv}}$, averaging over three random seeds with 95\% confidence intervals (details in Appendix~\ref{app:pareto}). Figures~\ref{fig:pareto_frontier} show that RL-adv consistently shifts the frontier upward relative to SL-adv under both PGD and AutoAttack, indicating a strictly more favorable trade-off. These results demonstrate that the robustness gains arise from improved optimization geometry rather than sacrificing clean accuracy.

% \begin{figure}[htbp]
%     \centering
%     \includegraphics[width=0.2\linewidth]{figs/pareto_pgd.png}
%     \caption{Clean–robust Pareto frontiers for 6-layer CNN under (a) PGD and (b) AutoAttack.}
%     \label{fig:pareto_frontier}
% \end{figure}

\subsection{Summary of Empirical Findings}\label{sec:results_summary}
 
Across all experiments, a consistent pattern emerges:
\begin{enumerate}
    \item \textbf{Plain RL} substantially resists gradient-based attacks (PGD) but provides little advantage under adaptive (AutoAttack) or gradient-free (Square) attacks, and remains vulnerable to transfer attacks from SL surrogates.
    \item \textbf{RL-adv} generally achieves the strongest robustness across attack types, outperforming SL-adv in the majority of settings.
    \item The robustness gap between RL and SL is largest for standard PGD and smallest for AutoAttack and Square, suggesting that RL's primary effect is disrupting gradient-based optimization rather than eliminating adversarial vulnerability.
\end{enumerate}
 
These observations raise a natural question: \emph{\textbf{What mechanism causes RL training to disrupt gradient-based adversarial optimization, and why is this effect insufficient on its own?}} We will address this in the following section.

% \section{Robustness Mechanism behind Reinforcement Learning}\label{sec:mechanism_rl}

\section{Mechanism Analysis}\label{sec:analysis}
The experimental results in Section~\ref{sec:results} show that RL training disrupts gradient-based attacks but does not eliminate adversarial vulnerability. In this section, we analyze the mechanism behind this phenomenon using the gradient analysis framework introduced in Section~\ref{sec:gradient_analysis_framework}. We examine three complementary perspectives: loss geometry, gradient instability analysis, and predictive uncertainty. 

\subsection{Decision boundary and loss geometry}\label{sec:loss_geometry}
We first examine how SL and RL shape the loss landscape from which adversarial gradients are derived. The fundamental difference traces back to the optimization process. In SL, the cross-entropy loss assigns each sample a deterministic gradient: given a fixed input-label pair $(X, y)$, the gradient $\nabla_x \mathcal{L}_{CE}$ is uniquely determined. During training, this deterministic signal creates a loss landscape with sharp, well-defined gradient structures, which is useful information for gradient-based attackers. In RL, the policy gradient objective introduces stochasticity through action sampling and $\varepsilon$-greedy exploration. This noisy optimization signal acts as an implicit regularizer, producing a flatter loss landscape with less pronounced gradient features, which is harder for gradient-based attackers to exploit. For further study, Figure~\ref{fig:SL_rl_decisionboundary_losslandscape} visualizes this difference on a representative CIFAR-10 image.
 
\textbf{Decision boundary:} Figure~\ref{fig:SL_rl_decisionboundary_losslandscape} (a) shows the decision boundary of both SL and RL, where SL's decision boundary should be steeper than RL's decision boundary, because of its deterministic gradients; however, a 2D decision boundary shows limited visual differentiation.

\textbf{Loss landscape:} Figure~\ref{fig:SL_rl_decisionboundary_losslandscape} (b) shows that SL has a larger gradient magnitude and a wider dynamic range than the RL on the decision boundary (\((max-min)_{boundary}\) value of SL \(>>\) 10.5, while \((max-min)_{boundary}\) value of RL \(<\) 10.5). This loss gradient difference directly impacts adversarial vulnerability: SL's large loss gradients enable efficient perturbation calculation via \(\nabla_{x} \mathcal{L}(x,y) \), whereas RL's small loss gradients inherently resist gradient-based attack optimization.

\begin{figure}[htbp]
    \centering
    \includegraphics[width=1.0\linewidth]{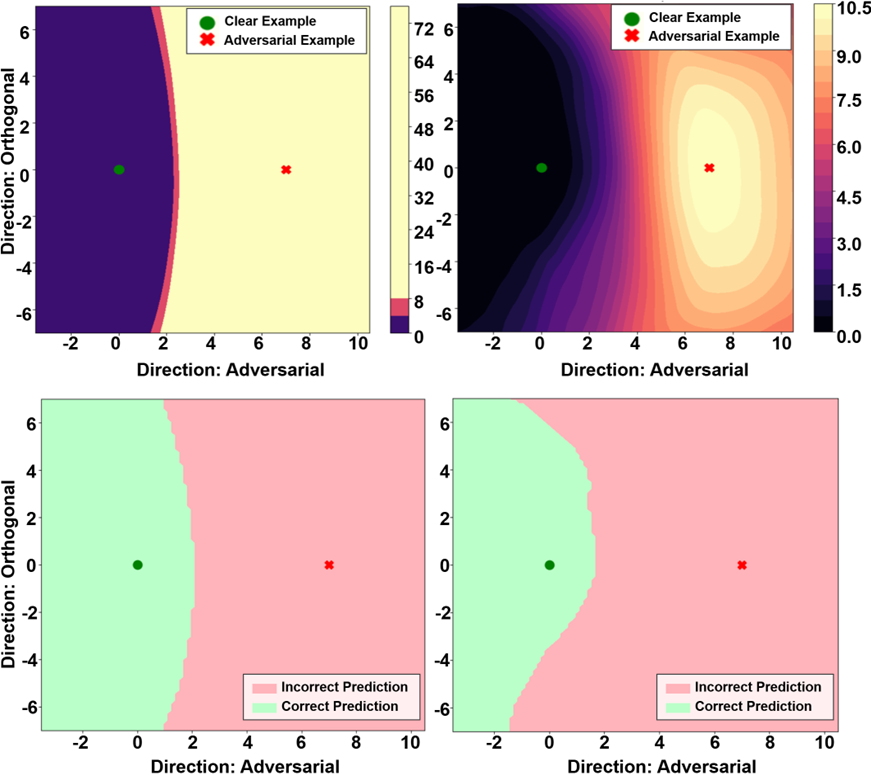}
    \caption{Comparative analysis of 6-layer CNN models trained with (left) SL versus (right) RL on one image from CIFAR-10: (upper) decision boundary and (bottom) loss landscape.}
    \label{fig:SL_rl_decisionboundary_losslandscape}
\end{figure}

\subsection{Gradient Instability Analysis}\label{sec:grad_instability}
We next quantify the gradient-level differences using both static and dynamic indicators, evaluated on the 6-layer CNN trained on CIFAR-10.
 
\textbf{Static indicators (the gradient field before attack):}
For the static indicators, the \textbf{average gradient norm (AGN)} is larger for SL (2.158) than for RL (1.9527) for the complete CIFAR-10 dataset. This indicates that small input perturbations cause larger loss change in SL, yielding larger increases in the attack loss at each step. The \textbf{input gradient variance (IGV)} further confirms that SL updates in input space are consistently larger than RL, shown in Figure~\ref{fig:cos_norm_eps3_digv_igv_cos_norm}~(a). In contrast, the \textbf{directional input gradient variance (dIGV)} is markedly higher for RL, shown in Figure~\ref{fig:cos_norm_eps3_digv_igv_cos_norm}~(a), reflecting greater instability of gradient directions under perturbations. It is concluded that these results imply that SL is more vulnerable because of its larger and more stable gradients (high AGN/IGV, low dIGV), whereas RL is more robust due to unstable gradient directions (high dIGV) and smaller increases in the attack loss at each step (low AGN/IGV).
 
\textbf{Dynamic indicators (gradient evolution during attack):}
For the dynamic indicators, Figure~\ref{fig:cos_norm_eps3_digv_igv_cos_norm}(b) tracks gradient behavior across PGD iterations. The \textbf{gradient stability under attack (GSUA)} remains high for SL ($\sim$0.8), indicating that the adversary maintains a coherent ascent direction throughout the attack. For RL, GSUA drops to low or even negative values ($\sim$$-0.2$), meaning that consecutive attack steps point in inconsistent or opposing directions, preventing the attack from making steady progress. The \textbf{$\ell_2$ gradient norm} trajectory tells a complementary story: SL maintains a large gradient norm ($\sim$0.6) across iterations, enabling substantial adversarial progress per step, while RL's gradient norm remains small ($\sim$0.1), further reducing the per-step loss gain. The combination of directional instability (low GSUA) and small gradient magnitude (low $\ell_2$ norm) creates a compounding effect: each PGD step on an RL model not only moves in an unreliable direction but also produces a smaller increase in the attack objective, tightening the upper bound on attack-loss growth in Proposition~\ref{prop:loss_growth} and making the attack harder to succeed within a fixed iteration budget.

\begin{figure}[htbp]
    \centering
    \includegraphics[width=1.0\linewidth]{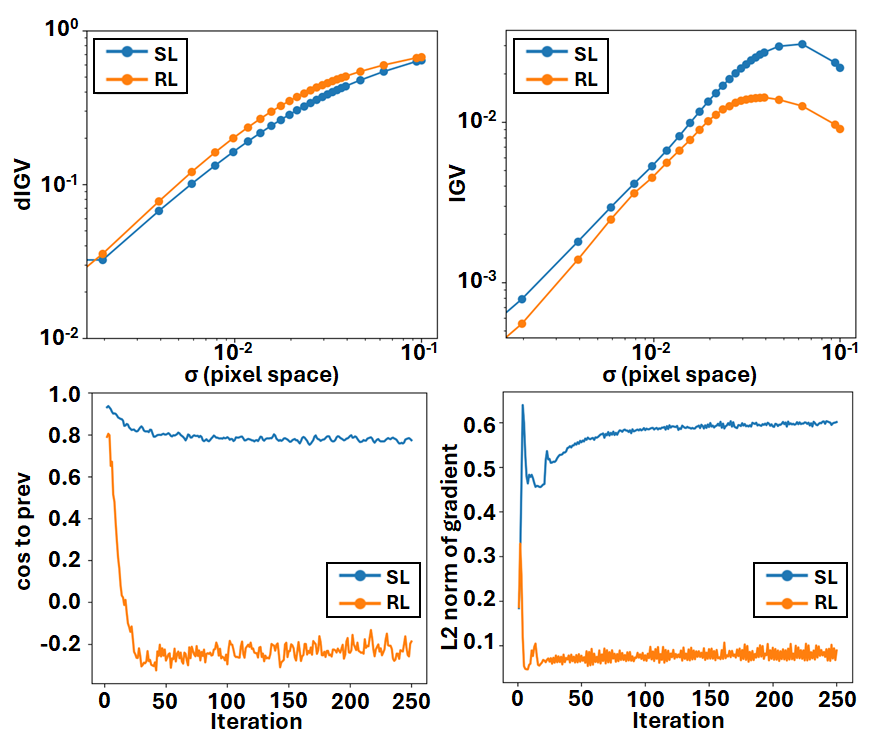}
    \caption{Comparison of 6-layer CNNs trained with SL (blue) and RL (orange) on CIFAR-10: (upper) Gradient variance indicators (dIGV, IGV) as a function of input noise scale $\sigma$ and (bottom) PGD attack dynamics showing cosine similarity to the previous step and gradient $L_2$ norm across iterations.}
    \label{fig:cos_norm_eps3_digv_igv_cos_norm}
\end{figure}

%\todo{TODO experiment: epsilon- non-epsilon during 1-250 iteration}

\subsection{Predictive Uncertainty Under Attack}\label{sec:analysis_entropy}
The gradient-level analysis predicts a specific consequence for model outputs: if RL's gradients resist adversarial optimization, then attacks should be less successful at driving the model toward confident misclassifications. We verify this prediction through predictive entropy analysis. Figure~\ref{fig:epsilon_entropy} shows the mean predictive entropy of SL (CNN-SL), adversarially-trained SL (CNN-SL-adv), and RL (CNN-RL) under varying perturbation magnitudes (\(\epsilon\)). From the results, the RL model consistently produces a higher entropy than both SL and SL-adv.

The explanation follows directly from the gradient analysis. SL is characterized by stable gradient directions (high cosine similarity, low dIGV) and relatively large gradient norms (high AGN, large \(L_2\) norm), allowing gradient-based attack (e.g., PGD) updates to efficiently align with adversarial directions and quickly drive an incorrect logit above the correct one. In contrast, RL has unstable gradient directions (low cosine similarity, high dIGV) and smaller gradient norms (low AGN, small \(L_2\) norm), meaning that attack steps tend to fluctuate in direction and produce smaller per-step gains in the attack objective. Even when attack steps move toward an adversarial direction, the increase of the incorrect logit relative to the correct one is much slower due to small gradient norms. As a result, SL tends to yield highly confident but incorrect predictions (e.g., [0.01, 0.01, 0.98], low entropy), as shown in Figure~\ref{fig:epsilon_entropy} (right), whereas RL outputs remain less sharply peaked (e.g., [0.3, 0.3, 0.4], higher entropy). These findings suggest that SL models are more prone to overconfident errors, while RL models maintain higher predictive uncertainty even when misclassifying.

\begin{figure}[htbp]
    \centering
    \includegraphics[width=0.9\linewidth]{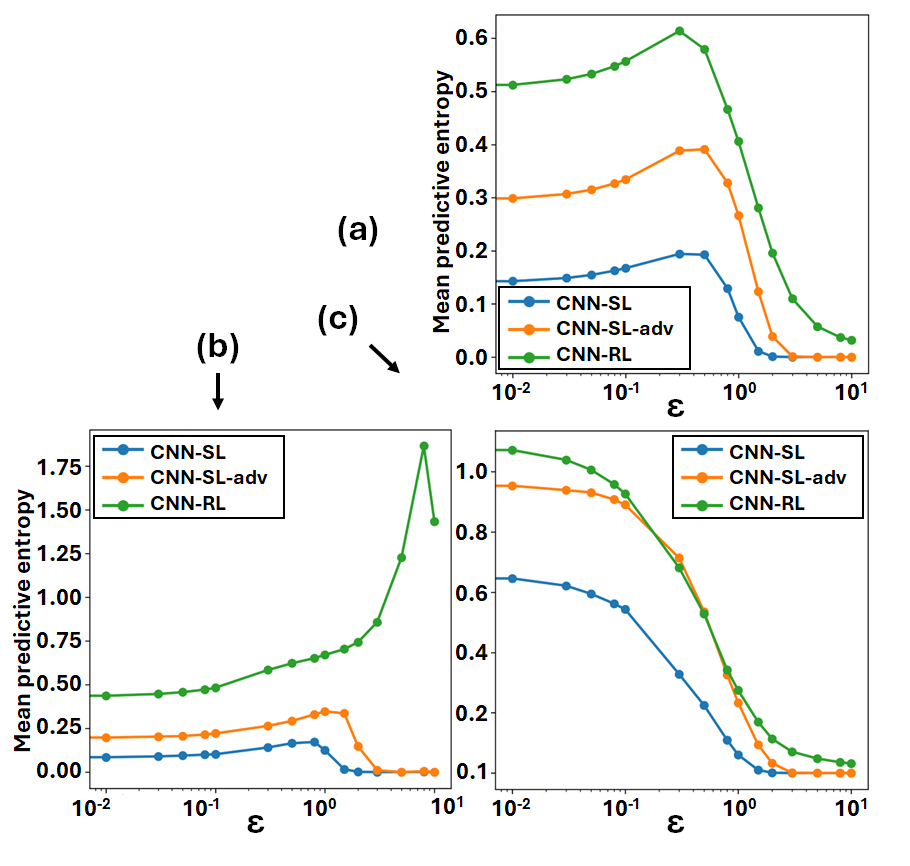}
    \caption{Predictive entropy vs. \(\epsilon\) (perturbation magnitude) for 6-layer CNN Architecture (CNN-SL (blue), CNN-SL-adv (orange), CNN-RL (green)) on CIFAR-10: total (a), correct prediction (b), wrong prediction (c).}
    \label{fig:epsilon_entropy}
\end{figure}

\section{Theoretical Grounding: From Empirical Indicators to Gradient-Based Attackability}\label{sec:formal_attackability}

The results in Section~\ref{sec:grad_instability} reveal two complementary gradient-level effects induced by RL training. First, the static indicators show that RL-trained models exhibit smaller input-gradient magnitudes than their SL counterparts. Second, the dynamic indicators show that RL-trained models exhibit substantially less stable attack-gradient directions along PGD trajectories. These two effects together suggest that RL reduces the effectiveness of gradient-based adversarial optimization. In this section, we will formalize this connection between gradient indicators and attack behavior. 

\subsection{Unified Setup}\label{sec:unified_setup}
Consider a clean example $(x_0,y)$ and a model $M$. Let $z_k^M(x)$ denote the logit for class $k$. Define the logit margin of the true class against the strongest wrong class by
\[
m_M(x,y):=z_y^M(x)-\max_{k\ne y}z_k^M(x).
\]
where $m_M(x,y)>0$ implies that $M$ predicts $y$ correctly, while $m_M(x,y)\le 0$ means that $x$ is on or beyond the decision boundary for an untargeted attack.

Let $J_M(x,y)$ be the attack objective maximized by the adversary, such as cross-entropy or a margin-based loss. We consider a $\ell_2$ gradient-based attack with step size $\alpha>0$ and $T$ steps:
\[
g_t:=\nabla_x J_M(x_t,y),
\qquad
 d_t:=
 \begin{cases}
 g_t/\norm{g_t}_2, & g_t\ne 0,\\
 0, & g_t=0,
 \end{cases}
\]
\[
x_{t+1}:=\Pi_C(x_t+\alpha d_t),
\qquad t=0,\ldots,T-1,
\]
where $g_t$ is the input gradient of the attack objective at the current PGD iterate, $d_t$ is the proposed normalized attack direction, $\alpha$ is the PGD step size, $x_t$ and $x_{t+1}$ are adversarial candidates at iteration $t$ and $t+1$, and $\Pi_C$ denotes Euclidean projection onto the feasible perturbation set
\[
C=B_2(x_0,\epsilon_{\mathrm{adv}})\cap[0,1]^d=\{x\in[0,1]^d:\|x-x_0\|_2\le \epsilon_{\mathrm{adv}}\}.
\]

Because projection may change both the step length and direction, define the actual attack step, its step length, and its unit direction:
\[
s_t:=x_{t+1}-x_t,
\qquad
\ell_t:=\norm{s_t}_2,
\qquad
u_t:=\frac{s_t}{\ell_t} \quad \text{whenever } \ell_t>0.
\]

For any intermediate PGD iteration $\tau\le T$, define the cumulative attack-gradient norm and the net PGD displacement up to iteration $\tau$, respectively, by
\[
\mathcal G_\tau(M;x_0,y):=\sum_{t=0}^{\tau-1}\norm{g_t}_2,
\qquad
R_\tau(M;x_0,y):=\norm{x_\tau-x_0}_2.
\]

\begin{assumption}[Local smoothness of the attack loss]
\label{assump:loss_smoothness}
The attack objective $J_M(\cdot,y)$ is $L_J$-smooth on perturbation set $C$, i.e.,
\[
\norm{\nabla_x J_M(a,y)-\nabla_x J_M(b,y)}_2
\le
L_J\norm{a-b}_2 
\quad \text{for all } a,b\in C.
\]
\end{assumption}

\begin{assumption}[Local Lipschitzness of the margin]
\label{assump:local_margin}
The margin $m_M(\cdot,y)$ is $L_m$-Lipschitz on perturbation set $C$, i.e.,
\[
|m_M(a,y)-m_M(b,y)|\le L_m\norm{a-b}_2
\quad \text{for all } a,b\in C.
\]
\end{assumption}
% \begin{remark}[On neural networks]
% For ReLU networks and max-margin functions, differentiability may fail at a measure-zero set of kink points. The statements can be interpreted under local smoothness almost everywhere, or with subgradients / generalized gradients. Since the purpose here is to formalize attack progress along realized PGD trajectories, these regularity assumptions should be understood as local assumptions on the relevant perturbation region.
% \end{remark}

All proofs of the propositions and theorem in this section are deferred to Appendix~\ref{app:theory_proofs}.

\subsection{Finite-budget first-order attackability}
\begin{proposition}[Pathwise attack-loss growth bound]
\label{prop:loss_growth}
Assume $J_M(\cdot,y)$ is $L_J$-smooth on $C$~\cite{sinha2017certifying}. Then for every intermediate PGD iteration $\tau\le T$ of the projected $\ell_2$-PGD trajectory,
\[
J_M(x_\tau,y)-J_M(x_0,y)
\le
\alpha\sum_{t=0}^{\tau-1}\norm{g_t}_2
+
\frac{L_J\alpha^2}{2}\tau.
\]
Equivalently,
\[
J_M(x_\tau,y)
\le
J_M(x_0,y)
+
\alpha\mathcal G_\tau(M;x_0,y)
+
\frac{L_J\alpha^2}{2}\tau.
\]
\end{proposition}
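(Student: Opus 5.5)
The plan is to apply the standard descent lemma for $L_J$-smooth functions to each PGD step and then telescope. Since $C$ is convex (an intersection of a Euclidean ball and a box), the segment $[x_t,x_{t+1}]$ lies in $C$ for every $t$. Assumption~\ref{assump:loss_smoothness} therefore applies along that segment and gives the one-step upper bound
\[
J_M(x_{t+1},y)\le J_M(x_t,y)+\inner{g_t}{s_t}+\frac{L_J}{2}\norm{s_t}_2^2 .
\]
To obtain this, I would write $J_M(x_{t+1},y)-J_M(x_t,y)=\int_0^1\inner{\nabla_x J_M(x_t+\lambda s_t,y)}{s_t}\,d\lambda$. I would then add and subtract $\inner{g_t}{s_t}$ and bound the remainder by $\int_0^1 L_J\lambda\norm{s_t}_2^2\,d\lambda$ using Cauchy--Schwarz and the gradient-Lipschitz property.

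The key step is bounding the actual step length $\ell_t=\norm{s_t}_2$ by $\alpha$, which is needed because projection may alter the step. The iterates satisfy $x_t\in C$ for $t\ge1$ by construction, and $x_0\in C$ trivially. Since $\Pi_C$ is nonexpansive and fixes $x_t$,
\[
\norm{s_t}_2=\norm{\Pi_C(x_t+\alpha d_t)-\Pi_C(x_t)}_2\le\alpha\norm{d_t}_2\le\alpha ,
\]
where $\norm{d_t}_2\in\{0,1\}$. Cauchy--Schwarz then gives $\inner{g_t}{s_t}\le\norm{g_t}_2\,\ell_t\le\alpha\norm{g_t}_2$, and the quadratic term is at most $L_J\alpha^2/2$. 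The degenerate case $g_t=0$ needs no special treatment: then $d_t=0$, $s_t=0$, and the per-step increment is zero, which is consistent with the bound.

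Summing the one-step bounds over $t=0,\dots,\tau-1$ telescopes the left side to $J_M(x_\tau,y)-J_M(x_0,y)$. This yields $\alpha\sum_{t<\tau}\norm{g_t}_2+\tfrac{L_J\alpha^2}{2}\tau$, which is exactly the first inequality. The equivalent form follows by substituting the definition of $\mathcal G_\tau(M;x_0,y)$.

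I expect no real obstacle. The only points needing care are the convexity of $C$, so that smoothness on $C$ covers each segment, and the nonexpansiveness argument showing projection never lengthens the step. I would state both explicitly before telescoping.
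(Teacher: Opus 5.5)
Your proposal is correct and matches the paper's proof step for step: the per-step descent lemma, nonexpansiveness of $\Pi_C$ with $\Pi_C(x_t)=x_t$ giving $\ell_t\le\alpha$, Cauchy--Schwarz, and telescoping. Your extra justifications (convexity of $C$ so each segment $[x_t,x_{t+1}]$ lies in $C$, the integral derivation of the descent lemma, and $x_0\in C$) only fill in details the paper leaves implicit.
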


\begin{remark}[Interpretation]
This proposition claims that along a fixed finite-step PGD trajectory, a smaller cumulative attack-gradient norm gives a smaller upper bound on how much the attack objective can increase.
\end{remark}

\begin{proposition}[Attack-step interference identity]
\label{prop:step_interference}
For any projected PGD trajectory and any intermediate PGD iteration $\tau\le T$,
\[
x_\tau-x_0=\sum_{t=0}^{\tau-1}s_t.
\]
Moreover,
\[
R_\tau(M;x_0,y)^2
=
\norm{x_\tau-x_0}_2^2
=
\sum_{t=0}^{\tau-1}\ell_t^2
+
2\sum_{0\le i<j\le \tau-1}\ell_i\ell_j\inner{u_i}{u_j}.
\]
\end{proposition}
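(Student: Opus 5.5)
The identity is purely algebraic and needs neither Assumption~\ref{assump:loss_smoothness} nor Assumption~\ref{assump:local_margin}. The plan is to first establish the telescoping representation of the displacement, and then expand its squared Euclidean norm.

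For the first claim, recall the definition $s_t:=x_{t+1}-x_t$. Summing over $t=0,\ldots,\tau-1$ gives
\[
\sum_{t=0}^{\tau-1}s_t=\sum_{t=0}^{\tau-1}(x_{t+1}-x_t)=x_\tau-x_0.
\]
Every intermediate iterate cancels, so the sum telescopes. This holds for any sequence of iterates, whatever the projection $\Pi_C$ does to individual steps. That is why the statement is phrased in terms of the realized steps $s_t$ and not the proposed steps $\alpha d_t$.

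For the second claim, rewrite each step as $s_t=\ell_t u_t$. When $\ell_t>0$ this follows from the definition of $u_t$. When $\ell_t=0$ we have $s_t=0$, and we may assign $u_t$ any unit vector (or zero), since every term containing it carries the factor $\ell_t=0$. This degenerate case is the only point needing care, and it is handled by a short remark fixing the convention for $u_t$. Then expand the squared norm by bilinearity and symmetry of the inner product:
\[
\norm{x_\tau-x_0}_2^2=\inner{\sum_{i}s_i}{\sum_j s_j}=\sum_{t}\norm{s_t}_2^2+2\sum_{i<j}\inner{s_i}{s_j}.
\]
Substituting $\norm{s_t}_2^2=\ell_t^2$ (using $\norm{u_t}_2=1$) and $\inner{s_i}{s_j}=\ell_i\ell_j\inner{u_i}{u_j}$ yields the stated formula. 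Combined with the definition $R_\tau=\norm{x_\tau-x_0}_2$, this completes the argument.

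I expect no real obstacle. The step requiring the most attention is making the convention for $u_t$ at zero-length steps explicit, so that the cross terms are well defined. It may also be worth noting afterwards why the identity matters for the interpretation. Because $\ell_t\le\alpha$ (projection onto the convex set $C$ is nonexpansive and $x_t\in C$), negative cosines $\inner{u_i}{u_j}$, the kind indicated by low GSUA, directly shrink the net displacement $R_\tau$ below the maximal value $\alpha\tau$.
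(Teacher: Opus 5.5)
Your proposal is correct and follows essentially the same route as the paper's proof: telescoping $\sum_t s_t = x_\tau - x_0$, expanding the squared norm by bilinearity, and substituting $s_t = \ell_t u_t$, with the degenerate case $\ell_t = 0$ dismissed because every term involving that step vanishes. One small simplification: $\norm{s_t}_2^2 = \ell_t^2$ holds directly from the definition $\ell_t = \norm{s_t}_2$, so you do not need $\norm{u_t}_2 = 1$ there.
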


\begin{remark}[Interpretation]
Proposition~\ref{prop:step_interference} shows that PGD progress is not determined only by the length of individual steps, but also by how coherently those steps align over the attack trajectory. The cross terms $\inner{u_i}{u_j}$ measure pairwise directional coherence between realized PGD updates. When these terms are positive, attack steps accumulate coherently and the net displacement $R_\tau$ grows. When they are small or negative, successive updates interfere destructively, making the trajectory behave more like a random walk or self-canceling process. This formalizes the mechanism suggested by GSUA: SL maintains stable ascent directions, whereas RL produces unstable or opposing directions that reduce coherent finite-budget PGD progress.
\end{remark}

\begin{definition}[Weighted path coherence]
\label{def:weighted_path_coherence}
For any partial PGD trajectory ending at iteration $\tau$, define
\[
\mathcal C_\tau^{w}
=
\frac{
\sum_{0\le i<j\le \tau-1}
\ell_i\ell_j\inner{u_i}{u_j}
}{
\sum_{0\le i<j\le \tau-1}
\ell_i\ell_j
},
\]
whenever the denominator is nonzero.
\end{definition}

By Proposition~\ref{prop:step_interference},
\[
R_\tau(M;x_0,y)^2
=
\sum_{t=0}^{\tau-1}\ell_t^2
+
2\mathcal C_\tau^w
\sum_{0\le i<j\le \tau-1}\ell_i\ell_j.
\]
Thus, positive path coherence increases net PGD displacement, while low or negative path coherence reduces it. GSUA measures adjacent-step coherence, whereas $\mathcal C_\tau^w$ captures all-pair coherence along the full PGD trajectory; we therefore interpret GSUA as a dynamic proxy for the broader path-coherence effect.

\begin{theorem}[Finite-budget first-order attackability]
\label{thm:finite_budget_attackability}
Assume that $J_M(\cdot,y)$ is $L_J$-smooth on $C$~\cite{sinha2017certifying} and that 
$m_M(\cdot,y)$ is $L_m$-Lipschitz on $C$~\cite{tsuzuku2018lipschitz,madry2017towards}. Then for every intermediate 
PGD iteration $\tau\le T$, the following two bounds hold.

\noindent\textbf{Attack-loss growth bound.}
\[
J_M(x_\tau,y)
\le
J_M(x_0,y)
+
\alpha\mathcal G_\tau(M;x_0,y)
+
\frac{L_J\alpha^2}{2}\tau.
\]

\noindent\textbf{Margin-preservation bound.}
\[
m_M(x_\tau, y) \ge m_M(x_0, y)
- L_m \sqrt{
\sum_{t=0}^{\tau-1} \ell_t^2
+ 2 \sum_{0 \le i < j \le \tau-1}
\ell_i \ell_j \langle u_i, u_j \rangle
}.
\]
Equivalently,
\[
m_M(x_\tau,y)
\ge
m_M(x_0,y)
-
L_mR_\tau(M;x_0,y).
\]
\end{theorem}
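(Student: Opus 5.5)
The theorem combines two earlier results, so the plan is to assemble it rather than prove anything new. The attack-loss growth bound is exactly Proposition~\ref{prop:loss_growth}. I would take it as given, but I would also recall its one-line mechanism so the dependence on the assumptions is visible. By Assumption~\ref{assump:loss_smoothness} and the descent lemma applied on the convex set $C$ (both $x_t$ and $x_{t+1}$ lie in $C$, so the segment between them does too),
\[
J_M(x_{t+1},y)\le J_M(x_t,y)+\inner{g_t}{s_t}+\tfrac{L_J}{2}\ell_t^2 .
\]
Projection onto a convex set containing $x_t$ is nonexpansive, so $\ell_t=\norm{\Pi_C(x_t+\alpha d_t)-\Pi_C(x_t)}_2\le \alpha\norm{d_t}_2\le\alpha$. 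Cauchy--Schwarz then gives $\inner{g_t}{s_t}\le \alpha\norm{g_t}_2$. Summing over $t=0,\dots,\tau-1$ telescopes to the stated bound involving $\alpha\mathcal G_\tau$ and $\tfrac{L_J\alpha^2}{2}\tau$.

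For the margin-preservation bound, I would first use Assumption~\ref{assump:local_margin}. Since $x_0,x_\tau\in C$ (the trajectory stays in $C$ because each iterate is a projection onto $C$, and $x_0\in C$ trivially), we get $m_M(x_\tau,y)\ge m_M(x_0,y)-L_m\norm{x_\tau-x_0}_2 = m_M(x_0,y)-L_mR_\tau(M;x_0,y)$. This is the second, ``equivalent'' form. To obtain the first form, I would substitute the expression for $R_\tau$ from Proposition~\ref{prop:step_interference}: $R_\tau=\sqrt{R_\tau^2}$ equals the square root of $\sum_t\ell_t^2+2\sum_{i<j}\ell_i\ell_j\inner{u_i}{u_j}$. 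That identity is just the telescoping $x_\tau-x_0=\sum_t s_t$ followed by expanding the squared norm with $s_t=\ell_t u_t$.

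The only delicate point I see is the convention for steps with $\ell_t=0$, where $u_t$ is undefined. I would handle it by declaring that such terms contribute zero, since $s_t=0$, and noting that the expansion $\norm{\sum_t s_t}^2=\sum_t\norm{s_t}^2+2\sum_{i<j}\inner{s_i}{s_j}$ holds regardless. A second minor point is verifying that the iterates stay in $C$, so that the local assumptions apply at every point used. Neither of these is a real obstacle, so the theorem should follow directly as the conjunction of the two propositions plus one Lipschitz step.
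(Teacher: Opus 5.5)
Your proposal is correct and matches the paper's proof. The loss bound is taken directly from Proposition~\ref{prop:loss_growth} (descent lemma, nonexpansiveness of $\Pi_C$ giving $\ell_t\le\alpha$, Cauchy--Schwarz, summation), and the margin bound is the Lipschitz step with $a=x_\tau$, $b=x_0$ followed by substituting the expansion of $R_\tau^2$ from Proposition~\ref{prop:step_interference}. You also note that the segment between $x_t$ and $x_{t+1}$ lies in the convex set $C$, so the smoothness assumption applies along it; the paper uses this only implicitly.
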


\begin{remark}[How to compare RL and SL]
The theorem gives measurable quantities for comparing finite-budget attackability. Under comparable local smoothness and initial margins, a model with smaller cumulative attack-gradient norm $\mathcal G_\tau$ has a smaller attack-loss growth bound. A model with smaller net displacement $R_\tau$, which can arise when realized attack-step directions have low or negative path coherence, has a smaller upper bound on margin reduction. Thus, if RL exhibits smaller $\mathcal G_\tau$ and smaller realized displacement $R_\tau$ (consistent with lower path coherence) than SL under the same attack configuration, the theorem predicts tighter finite-budget first-order attackability bounds for RL.
\end{remark}
\section{Discussion}\label{sec:discussion}
The mechanism analysis in Section~\ref{sec:analysis} establishes that RL training produces fundamentally different gradient properties than SL. Specifically, RL has smaller norms, unstable directions, and flatter loss landscapes, coming from the stochastic nature of policy gradient optimization with $\varepsilon$-greedy exploration. In this section, we synthesize these findings into three practical insights about when and why RL-based training helps, where it falls short, and how it can be combined with adversarial training for stronger defense. We also discuss the computational trade-offs involved.

\subsection{Why RL disrupts gradient-based attacks.}
SL's cross-entropy loss provides each training sample with a deterministic, low-variance gradient signal. Over training, this produces models with sharp loss landscapes, large input-gradient norms (high AGN/IGV), and stable gradient directions (low dIGV, high GSUA). These properties are precisely what gradient-based attackers need: a reliable, informative gradient field that supports efficient adversarial optimization. However, RL's policy gradient objective, combined with $\varepsilon$-greedy exploration, introduces stochasticity into the training signal. This acts as an implicit form of regularization, producing flatter loss landscapes, smaller gradient norms, and highly unstable gradient directions. When an attacker attempts to optimize adversarial perturbations on such a model, each PGD step faces both an unreliable direction and a small per-step loss gain, causing the attack to fail within practical iteration budgets. From a security perspective, this means that deploying an RL-trained model raises the bar for gradient-based attackers, requiring them to invest in more sophisticated attack strategies (adaptive restarts, alternative loss objectives, or surrogate-based approaches) rather than relying on standard PGD. This is a meaningful practical benefit, as PGD remains the default attack in many real-world adversarial evaluation pipelines.
 
\subsection{Why RL alone is insufficient.}
Despite disrupting gradient-based optimization, RL does not eliminate adversarial regions in the input space, and it only makes them harder to find via gradient methods. This is evidenced by two findings: (1)~AutoAttack, which uses adaptive strategies beyond standard gradients, reduces the RL advantage to near zero compared to SL; and (2)~transfer-based attack that adversarial examples generated on SL surrogates transfer effectively to RL models, confirming that exploitable input regions persist. These results indicate that RL training disrupts the optimization pathway to adversarial examples but does not eliminate the adversarial regions in input space. The adversarial vulnerability of neural networks is not solely a property of gradient quality; it also stems from the geometry of decision boundaries relative to natural data points~\cite{ilyasAdversarialExamplesAre2019}. RL's exploration-induced gradient flattening addresses the former but not the latter.
 
\begin{comment}
    This distinction has important implications for how RL-based robustness should be characterized. Following the taxonomy of Athalye et al.~\cite{athalyeObfuscatedGradientsGive2018}, RL's gradient disruption shares surface-level similarity with gradient masking. However, unlike typical gradient masking defenses, RL-adv maintains its robustness advantage even under AutoAttack (36.27\% vs.\ 24.87\% for SL-adv), suggesting that the gradient disruption is not merely obscuring a brittle defense but genuinely complements adversarial training.
\end{comment}
 
\subsection{Why RL-adv provides the strongest defense.}
RL-adv combines two complementary robustness mechanisms operating at different levels:
\begin{itemize}
    \item \textbf{Gradient-level defense} (from RL training): degrades the quality of gradient information available to attackers, increasing the difficulty of adversarial optimization. 
    \item \textbf{Decision-boundary-level defense} (from adversarial training): pushes decision boundaries away from data points, reducing the existence and proximity of adversarial regions in input space.
\end{itemize}
 
Specifically, neither mechanism alone is sufficient. RL without adversarial training is vulnerable to transfer attacks (8.29\% accuracy against SL-sourced adversarial examples). SL-adv without RL's gradient disruption is more easily attacked by standard gradient methods (4.96\% under PGD vs.\ 48.63\% for RL-adv on CIFAR-10). Their combination in RL-adv yields the strongest robustness across all attack types: 36.27\% under APGD-CE, 30.88\% against SL-sourced transfer attacks, and 32.41\% under Square Attack. This dual-layer defense framework suggests a practical design principle: rather than relying solely on adversarial training to harden decision boundaries, we can additionally leverage the training paradigm itself to degrade the gradient information available to attackers. We also note that the RL-adv margin over SL-adv narrows on more complex datasets under certain AutoAttack components (Appendix~\ref{app:aa_results}), suggesting that fully realizing the dual-layer benefit may require task-specific tuning of the adversarial training configuration.

\subsection{Computational Cost}
A practical limitation of RL-based training is its computational overhead. RL requires approximately 20$\times$ more training epochs than SL (2000 vs.\ 100) to achieve convergence, due to the higher variance of the policy gradient estimator and the exploration overhead introduced by $\varepsilon$-greedy action selection. On a single NVIDIA A100 GPU, training a 6-layer CNN on CIFAR-10 takes approximately 20 minutes for SL and approximately 300 minutes for RL. This overhead becomes increasingly prohibitive for larger models and datasets. For example, RL training on Places-365 under our settings is estimated to require multiple weeks on a single A100, which prevented us from completing those experiments.
 
Several directions could mitigate this limitation. (1)~First, variance reduction techniques for policy gradient estimators (e.g., learned baselines, control variates~\cite{schulman2015high, gu2016q}) could accelerate RL convergence. (2)~Second, hybrid training schedules that initialize with SL and fine-tune with RL could reduce the total RL training budget while preserving gradient-disruption benefits~\cite{ranzato2015sequence, ramrakhya2023pirlnav}. (3)~Third, more sample-efficient RL algorithms (e.g., PPO with adaptive clipping~\cite{schulman2017proximal}) could be explored as alternatives to REINFORCE. We leave the systematic investigation of these efficiency improvements to future work, noting that the computational cost does not diminish the conceptual contribution of identifying exploration-induced gradient disruption as a robustness mechanism.

\section{Conclusion}\label{sec:conclusion}
We investigated how the choice of training paradigm (supervised learning versus reinforcement learning) affects the adversarial robustness of image classifiers. Through extensive experiments across three datasets (CIFAR-10, CIFAR-100, ImageNet-100) and multiple architectures (4/6-layer-CNN, ResNet-18), we identified a mechanism by which RL training disrupts gradient-based adversarial attacks: the stochastic optimization signal introduced by policy gradients and $\varepsilon$-greedy exploration produces models with smaller gradient norms, highly unstable gradient directions, and flatter loss landscapes, making it substantially harder for attackers to optimize adversarial perturbations via standard gradient methods.
 
However, our analysis also reveals a critical limitation: RL training disrupts the optimization process used to discover adversarial examples, but does not eliminate the adversarial regions themselves. As a result, RL-trained models remain vulnerable to transfer-based attacks from SL surrogates and adaptive attacks that bypass gradient information. To overcome this limitation, we proposed RL-adv, which combines RL's gradient-level defense with the decision-boundary-level defense provided by adversarial training. RL-adv achieves the strongest overall robustness across attack types evaluated, demonstrating that these two mechanisms are both important for stronger robustness. This dual-layer defense framework offers a practical design principle: robustness can be improved not only by hardening decision boundaries through adversarial training, but also by leveraging the training paradigm itself to degrade the gradient information available to attackers. We believe these findings provide a stepping stone toward understanding how training paradigms shape adversarial vulnerability, with potential implications beyond image classification.
\section{Limitation and Future Work}\label{sec:limitation_future_work}
Although our study provides a systematic understanding of why reinforcement learning improves adversarial robustness against gradient-based attacks, several limitations remain, each suggesting concrete directions for future research.
 
\textbf{Transfer vulnerability.}
While RL-trained models resist direct gradient-based attacks, they remain vulnerable to adversarial examples transferred from SL models (Section~\ref{sec:transfer_based_attack}). This indicates that adversarial regions still exist in the input space; RL training primarily disrupts the optimization process used to discover them rather than eliminating these regions entirely. Future work should investigate whether stronger forms of RL regularization (e.g., entropy bonuses, policy smoothing) can reduce the density of adversarial regions, not just the accessibility of gradients.
 
\textbf{Computational overhead.}
RL training requires approximately 20$\times$ more epochs than SL (2000 vs.\ 100) to achieve convergence, making it computationally expensive for large-scale applications. Several promising directions could mitigate this cost: (1)~variance reduction techniques for policy gradient estimators~\cite{schulman2015high, gu2016q} could accelerate RL convergence; (2)~hybrid training schedules that initialize with SL and fine-tune with RL~\cite{ranzato2015sequence, ramrakhya2023pirlnav} could reduce the total RL training budget while preserving gradient-disruption benefits; and (3)~more sample-efficient RL algorithms such as PPO~\cite{schulman2017proximal} could replace REINFORCE.
 
\textbf{Architecture scope.}
Our robustness analysis focuses primarily on convolutional architectures (CNNs and ResNet-18). Extending RL-based robustness methods to Transformer or hybrid architectures requires a separate experimental setup. Whether the gradient-disruption mechanism we identified transfers to attention-based models, where the gradient structure differs fundamentally from CNNs, remains an open question.
 
\textbf{Attack coverage.}
We primarily evaluate robustness against standard attacks (PGD, AutoAttack, Square Attack). Stronger adaptive attacks specifically designed to exploit RL-trained models (e.g., attacks that account for the stochastic training process or target the flatter loss landscape directly) may reveal additional vulnerabilities. Future work should also investigate whether certified robustness guarantees can be established for RL-trained models, providing formal bounds rather than empirical evaluations alone.

%%
%% The next two lines define the bibliography style to be used, and
%% the bibliography file.
\bibliographystyle{ACM-Reference-Format}
\bibliography{references}

%%
%% If your work has an appendix, this is the place to put it.
\appendix

\section{Disambiguation of Epsilons}\label{app:eps_disambiguation}
To avoid ambiguity, we distinguish three different epsilon symbols used throughout the paper: the exploration rate in RL, the adversarial perturbation budget, and a tiny numerical constant for stability.
\begin{itemize}
\item \(\varepsilon_{\text{greedy}}\): \emph{Exploration rate} in \(\varepsilon\)-greedy behavior policy (probability of sampling a random action).
\item \(\varepsilon_{\text{adv}}\): \emph{Adversarial perturbation budget} (radius of the \(\ell_p\)-ball used by the attacker).
\item \(\epsilon_{\text{num}}\): Tiny numeric constant inside \(\log\) for stability (e.g., \(10^{-8}\)); never to be confused with the above.
\end{itemize}

\section{Robustness Indicators}\label{app:robustness_indicators}
This subsection summarizes the diagnostic indicators used to characterize robustness from the perspectives of decision geometry, loss landscape structure, gradient magnitude, gradient directional stability, and predictive uncertainty.

\textbf{Decision boundary diagrams} illustrate the classification regions under adversarial attack, and 

\textbf{loss landscape diagrams} visualize the loss gradient information. Both diagrams are drawn under two gradient directions: standard adversarial attack gradient and orthogonal-direction attacks gradient to draw a 2D diagram, where perturbations are constrained to be perpendicular to the gradient ascent direction (\(\nabla_{x}\mathcal{L} \perp \delta\)). 

\textbf{IGV (Input Gradient Variance)} measures the variance of the input gradient under small Gaussian perturbations:

\begin{equation}
\label{eq:igv}
IGV = \mathbb{E}_{x \sim \mathcal{D}} \left \{ \mathbb{E}_{\epsilon \sim \mathcal{N}(0, \sigma^2)} \left [ Var(\nabla_{x} \mathcal{L}(x + \epsilon, \hat{y})) \right ] \right \} ,
\end{equation}
where \(\epsilon\) is the Gaussian noise added to the input sample, \(x\) is the input sample, \(\hat{y}\) is the predicted sample, \(Var(\cdot)\) is the gradient variance, \(\nabla_{x} \mathcal{L}(\cdot)\) is the gradient.

\textbf{dIGV (direction Input Gradient Variance)} calculates the direction of gradient according to the input variance:

\begin{equation}
\label{eq:digv}
\begin{aligned}
\mathrm{dIGV} &= \mathbb{E}_{x \sim \mathcal{D}} \bigg\{ 
    \mathbb{E}_{\epsilon \sim \mathcal{N}(0, \sigma^2)} \left[ 
    1 - \left\langle 
        \frac{g^{(\epsilon)}}{\|g^{(\epsilon)}\|_2}, 
        \bar{u}
    \right\rangle 
    \right] \bigg\}, \\
\bar{u} &= \frac{\mathbb{E}_{\epsilon}\left( \frac{g^{(\epsilon)}}{\|g^{(\epsilon)}\|_2} \right)}
    {\left\| \mathbb{E}_{\epsilon}\left( \frac{g^{(\epsilon)}}{\|g^{(\epsilon)}\|_2} \right) \right\|_2}, \\
g^{(\epsilon)} &= \nabla_x \mathcal{L}(x+\epsilon, \hat{y}) ,
\end{aligned}
\end{equation}
where \(g\) is the attack gradient. 

\textbf{AGN (Average Gradient Norm)} calculates the sensitivity of gradient according to the input variance:

\begin{equation}
\label{eq:agn}
AGN = \mathbb{E}_{x \sim \mathcal{D}}[||(\nabla_{x} \mathcal{L}(x, y)||_{2}] ,
\end{equation}
where \(y\) is the true label, \(\mathcal{L}(\cdot)\) is the loss function.

\textbf{Gradient stability under attack (GSUA) diagram} is drawn to visualize the gradient stability under attack. It is calculated by calculating the cosine (similarity) between two steps of attack gradient:

\begin{equation}
\label{eq:gsua}
    \text{GSUA}^{(t)} = \cos\theta^{(t)} = \frac{g^{(t)} \cdot g^{(t-1)}}{\|g^{(t)}\| \|g^{(t-1)}\|}, g^{(t)} = \nabla_x \mathcal{L}(x^{(t)}, y) ,
\end{equation}
where \(t\) is the step. 

%\subsubsection{Entropy (Overconfidence) Measurement}
\textbf{Mean predictive entropy \(H\)} is used to represent the dispersibility of predicted output:

\begin{equation}
\label{eq:mean_entropy}
H = \frac{1}{N}\sum_{i=1}^N \left[ -\sum_{k=1}^C p_k(\mathbf{x}_i) \cdot \log p_k(\mathbf{x}_i) \right] ,
\end{equation}
where \(N\) is the number of test samples, \(C\) is the number of classes, \(p_k(\mathbf{x}_i)\) denotes the predicted probability of class \(k\) on input \(x_{i}\) (either clean or adversarial).

\section{Model Architectures}\label{app:models}
For completeness, we provide the backbone architectures used in the main experiments. These diagrams clarify the structural differences among the 4-layer CNN, 6-layer CNN, and ResNet-18 models considered throughout the paper, as shown in Figure~\ref{fig:model_architectures_appendix}.

\begin{figure}[htbp]
\centering
\includegraphics[width=1.0\linewidth]{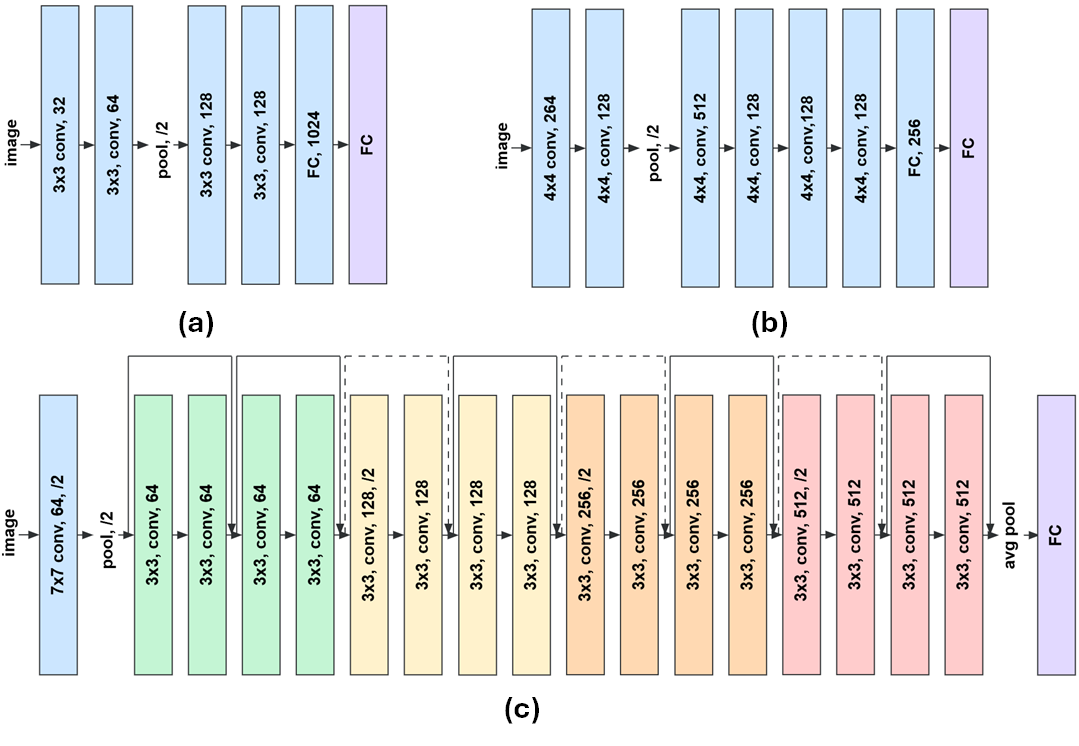}
\caption{Model architectures: (a) 4-layer CNN, (b) 6-layer CNN, and (c) ResNet-18 with residual connections.}
\label{fig:model_architectures_appendix}
\end{figure}

\section{Perturbation Budget}\label{app:perturbation_budget}
To verify that our findings are not specific to a single perturbation budget, we evaluate the 6-layer CNN on CIFAR-10 under $\ell_2$ PGD across seven budget levels. Table~\ref{tab:eps_sweep} reports adversarial accuracy for SL and RL models.

RL consistently outperforms SL across all budgets. The gap is negligible at $\epsilon = 0.25$ but widens rapidly, exceeding 50 percentage points for $\epsilon \geq 2.0$. RL accuracy degrades gracefully (from 74.98\% to 55.77\%), whereas SL collapses (from 74.88\% to 5.00\%). This pattern is consistent with our mechanism analysis: as the perturbation budget grows, the attacker has greater capacity to exploit informative gradient signals. SL's sharp, stable gradients become increasingly exploitable, while RL's unstable, low-magnitude gradients continue to resist optimization across the enlarged search space. We select $\epsilon = 7.0$ as the primary evaluation budget throughout the paper to operate in this strong-attack regime where gradient-structural differences are most clearly exposed.

\begin{table}[htbp]
\centering
\caption{Adversarial accuracy (\%) under $\ell_2$ PGD across perturbation budgets (6-layer CNN, CIFAR-10, 250 iterations). Images are normalized to $[0,1]$.}
\label{tab:eps_sweep}
\resizebox{\linewidth}{!}{
\begin{tabular}{llllllll}
\hline
$\epsilon$   & 0.25  & 0.5   & 1.0   & 2.0   & 3.5   & 5.0   & 7.0   \\ \hline
SL  (\%) & 74.88 & 53.86 & 25.37 & 17.59 & 14.26 & 11.21 & 5.00  \\
RL  (\%) & 74.98 & 71.17 & 70.11 & 69.05 & 65.52 & 60.75 & 55.77 \\
GAP (\%) & 0.10  & 17.31 & 44.74 & 51.46 & 51.26 & 49.54 & 50.77 \\ \hline
\end{tabular}
}
\end{table}

\section{Norm Sensitivity Analysis}\label{app:norm_sensitivity_analysis}
This subsection tests whether the robustness advantage of RL depends on the attack norm, and in particular whether the effect weakens when the attacker uses only gradient signs rather than full gradient vectors. A key prediction of our mechanism analysis is that RL's gradient disruption should be most effective against attacks that rely on precise gradient information. To test this, we compare robustness under $\ell_\infty$ PGD, which applies a sign operation that discards gradient magnitude and reduces sensitivity to directional instability. Table~\ref{tab:norm_sensitivity} reports results for all four model variants on CIFAR-10.

Under $\ell_\infty$ PGD, the SL--RL gap without adversarial training is dramatically smaller than under $\ell_2$ (Table~\ref{tab:eps_sweep}), never exceeding 4 percentage points. At $\epsilon = 8/255$, SL achieves 13.35\% and RL achieves 15.76\%, compared to 5.00\% vs.\ 55.77\% under $\ell_2$ at $\epsilon = 7.0$. This confirms that RL's defense operates at the gradient-information level: when the sign operation strips away the gradient properties that RL disrupts, the robustness advantage largely disappears.

However, RL-adv remains the strongest model under $\ell_\infty$ as well (38.49\% vs.\ 25.08\% for SL-adv at $\epsilon = 8/255$). This demonstrates that while RL's gradient-level defense alone is norm-dependent, the dual-layer defense provided by RL-adv generalizes across norm types: adversarial training provides boundary-level protection that remains effective regardless of whether the attacker uses full gradient vectors ($\ell_2$) or only gradient signs ($\ell_\infty$).

\begin{table}[htbp]
\centering
\caption{Adversarial accuracy (\%) under $\ell_\infty$ PGD across perturbation budgets (6-layer CNN, CIFAR-10, 100 iterations). Images are normalized to $[0,1]$.}
\label{tab:norm_sensitivity}
\begin{tabular}{lllll}
\hline
$\epsilon$  & 2/255 & 4/255 & 8/255 & 16/255 \\ \hline
SL     (\%) & 72.98 & 47.93 & 13.35 & 0.29   \\
SL-adv (\%) & 77.59 & 58.59 & 25.08 & 1.44   \\
RL     (\%) & 71.88 & 49.28 & 15.76 & 4.18   \\
RL-adv (\%) & 77.81 & 64.83 & 38.49 & 8.49   \\ \hline
\end{tabular}
\end{table}

\section{Complete Model Performance}\label{app:model_performance}
This subsection collects the full clean and adversarial performance results across datasets and architectures, together with transfer-based evaluations that complement the main-paper analysis.

We evaluate the robustness of 4-layer CNN, 6-layer CNN, ResNet-18 on CIFAR-10, CIFAR-100, and ImageNet-100 using non-targeted $\ell_2$ PGD (\(K=250\), step size \(\alpha=0.3\), \(\epsilon=7.0\)). Unless otherwise noted, all results are performed on the test set. Perturbations are applied in the input space before normalization, where we attack the pre-normalized images and then apply dataset normalization. Adversarial examples are clipped to the valid image range.

\subsection{CIFAR-10}
The complete performance on CIFAR-10 across 4-layer CNN, 6-layer CNN, and ResNet-18 is shown in Table~\ref{tab:cifar10_model_performance_across_models}. Transfer analyses for these architectures are summarized in Table~\ref{tab:cifar10_transfer_analysis_4layerCNN}, Table~\ref{tab:cifar10_transfer_analysis_6layerCNN}, Table~\ref{tab:cifar10_transfer_resnet18_nopt} and Table~\ref{tab:cifar10_transfer_resnet18_pt}, respectively.

We note that the SL-trained ResNet-18 already achieves substantial adversarial accuracy (46.08\% without pretraining, 67.61\% with pretraining), far exceeding the 4/6-layer CNNs ($\sim$5--6\%). This is consistent with prior observations that deeper architectures with residual connections tend to learn smoother representations even under standard training~\cite{heDeepResidualLearning2016, madryDeepLearningModels2018}. Importantly, this \emph{architectural} implicit robustness is orthogonal to the \emph{training-paradigm} effect analyzed in this paper. The former arises from model capacity and structural properties (e.g., skip connections), while the latter arises from the stochastic optimization signal of RL. Evidence for their independence is visible in the transfer results (Tables~\ref{tab:cifar10_transfer_resnet18_nopt}--\ref{tab:cifar10_transfer_resnet18_pt}): SL-sourced adversarial examples still transfer effectively to SL-trained ResNet-18 (accuracy drops to 46.08\% under self-attack), confirming that its implicit robustness does not eliminate adversarial regions. In contrast, RL-trained ResNet-18 maintains 75.44\% adversarial accuracy under direct PGD, reflecting the additional gradient-disruption mechanism. The main text focuses on the 6-layer CNN precisely to isolate the training-paradigm effect in a setting where architectural implicit robustness is minimal.

\begin{table}[htbp]
\caption{Model robustness evaluation (train, test, and AEs) in CIFAR-10 datasets across models, evaluation under \attacksetting.}
\label{tab:cifar10_model_performance_across_models}
\begin{center}
\begin{tabular}{lccc}
\toprule
Model & Clean train (\%) & Clean test (\%) & AE (\%) \\
\hline
(SL) 4-layer-CNN        & 86.53 & 82.03 & 5.90  \\
(SL) 4-layer-CNN-adv    & 84.98 & 81.25 & 7.01  \\
(RL) 4-layer-CNN        & 88.85 & 83.07 & 36.66 \\
(RL) 4-layer-CNN-adv    & 88.57 & 82.94 & 35.67 \\
\hline
(SL) 6-layer-CNN        & 98.30 & 90.74 & 5.00  \\
(SL) 6-layer-CNN-adv    & 96.81 & 90.11 & 4.96  \\
(RL) 6-layer-CNN        & 95.93 & 88.50 & 55.77 \\
(RL) 6-layer-CNN-adv    & 94.12 & 87.63 & 48.63 \\
\hline
(SL) Resnet18           & 99.91 & 91.56 & 46.08 \\
(SL) Resnet18-pt        & 99.99 & 95.94 & 67.61 \\
(SL) Resnet18-adv       & 99.35 & 90.84 & 28.47 \\
(SL) Resnet18-pt-adv    & 99.97 & 95.56 & 56.96 \\
(RL) Resnet18           & 98.03 & 91.40 & 75.44 \\
(RL) Resnet18-pt        & 98.21 & 94.22 & 65.07 \\
(RL) Resnet18-adv       & 97.15 & 90.40 & 70.09 \\
(RL) Resnet18-pt-adv    & 97.38 & 93.73 & 68.40 \\
\hline
\end{tabular}
\end{center}
\end{table}

\begin{table}[htbp]
\caption{Transfer analysis in CIFAR-10 datasets on 4-layer-CNN, evaluation under \attacksetting.}
\label{tab:cifar10_transfer_analysis_4layerCNN}
\begin{center}
\resizebox{\linewidth}{!}{%
\begin{tabular}{lcccc}
\toprule
Model & (SL) 4-layer-CNN & (SL) 4-layer-CNN-adv & (RL) 4-layer-CNN & (RL) 4-layer-CNN-adv \\
\hline
(SL) 4-layer-CNN        & 5.89  & 8.20  & 34.79 & 35.20  \\
(SL) 4-layer-CNN-adv    & 6.82  & 6.95  & 35.33 & 35.66  \\
(RL) 4-layer-CNN        & 7.25  & 8.81  & 35.21 & 34.48  \\
(RL) 4-layer-CNN-adv    & 13.09 & 14.40 & 34.99 & 33.74  \\
\hline
(SL) 6-layer-CNN        & 14.09 & 14.27 & 35.20 & 35.30  \\
(SL) 6-layer-CNN-adv    & 15.16 & 15.87 & 36.48 & 37.07  \\
(RL) 6-layer-CNN        & 13.44 & 13.58 & 35.16 & 35.57  \\
(RL) 6-layer-CNN-adv    & 32.24 & 33.48 & 47.53 & 45.59  \\
\hline
\end{tabular}
}
\end{center}
\end{table}

\begin{table}[htbp]
\caption{Transfer analysis in CIFAR-10 datasets on 6-layer-CNN, evaluation under \attacksetting.}
\label{tab:cifar10_transfer_analysis_6layerCNN}
\centering
\resizebox{\linewidth}{!}{
\begin{tabular}{lcccc}
\toprule
Model & (SL) 6-layer-CNN & (SL) 6-layer-CNN-adv & (RL) 6-layer-CNN & (RL) 6-layer-CNN-adv \\
\hline
(SL) 6-layer-CNN        & 5.81 & 7.18 & 48.45 & 43.15 \\
(SL) 6-layer-CNN-adv    & 9.53 & 5.74 & 50.14 & 44.86 \\
(RL) 6-layer-CNN        & 8.29 & 7.92 & 48.84 & 42.69 \\
(RL) 6-layer-CNN-adv    & 30.88 & 22.56 & 54.31 & 46.91 \\
\hline
(SL) 4-layer-CNN        & 37.40 & 30.03 & 56.19 & 50.09 \\
(SL) 4-layer-CNN-adv    & 36.32 & 29.19 & 56.00 & 50.51 \\
(RL) 4-layer-CNN        & 32.23 & 25.94 & 56.65 & 49.92 \\
(RL) 4-layer-CNN-adv    & 38.86 & 31.22 & 56.54 & 49.30 \\
\hline
\end{tabular}%
}
\end{table}

\begin{table}[htbp]
\caption{Transfer analysis in CIFAR-10 on ResNet-18 (non-pretrained), evaluation under \attacksetting.}
\label{tab:cifar10_transfer_resnet18_nopt}
\centering
\resizebox{\linewidth}{!}{%
\begin{tabular}{lcccc}
\toprule
Model & (SL) Resnet18 & (SL) Resnet18-adv & (RL) Resnet18 & (RL) Resnet18-adv \\
\hline
(SL) Resnet18        & 46.08 & 65.09 & 81.07 & 89.54 \\
(SL) Resnet18-adv    & 80.05 & 28.47 & 85.14 & 89.01 \\
(RL) Resnet18        & 66.93 & 63.77 & 75.44 & 86.82 \\
(RL) Resnet18-adv    & 81.89 & 75.34 & 80.04 & 70.09 \\
\bottomrule
\end{tabular}
}
\end{table}

\begin{table}[htbp]
\caption{Transfer analysis in CIFAR-10 on ResNet-18 (pretrained), evaluation under \attacksetting.}
\label{tab:cifar10_transfer_resnet18_pt}
\centering
\resizebox{\linewidth}{!}{%
\begin{tabular}{lcccc}
\toprule
Model & (SL) Resnet18-pt & (SL) Resnet18-pt-adv & (RL) Resnet18-pt & (RL) Resnet18-pt-adv \\
\hline
(SL) Resnet18-pt     & 67.61 & 58.02 & 65.53 & 84.48 \\
(SL) Resnet18-pt-adv & 73.84 & 56.96 & 77.56 & 88.86 \\
(RL) Resnet18-pt     & 67.68 & 59.25 & 65.07 & 79.69 \\
(RL) Resnet18-pt-adv & 75.32 & 65.08 & 73.26 & 68.40 \\
\bottomrule
\end{tabular}
}
\end{table}

\subsection{CIFAR-100}\label{app:cifar_100}
The complete performance on CIFAR-100 across 4-layer CNN, 6-layer CNN, and ResNet-18 is shown in Table~\ref{tab:cifar100_model_performance_across_models}. Transfer analyses for these architectures are summarized in Table~\ref{tab:cifar100_transfer_analysis_4layerCNN}, Table~\ref{tab:cifar100_transfer_analysis_6layerCNN}, Table~\ref{tab:cifar100_transfer_resnet18_nopt} and Table~\ref{tab:cifar100_transfer_resnet18_pt}, respectively. We note that RL achieves substantially lower clean accuracy on the 4-layer CNN for CIFAR-100 (28.38\% vs.\ 52.40\% for SL). This is attributable to the limited model capacity of the 4-layer architecture for a 100-class task: the high-variance policy gradient signal requires sufficient model expressiveness to converge effectively. On the 6-layer CNN and ResNet-18, where model capacity is adequate, the clean accuracy gap remains within 5--8 percentage points.

\begin{table}[htbp]
\caption{Model robustness evaluation (train, test, and AEs) in CIFAR-100 datasets across models, evaluation under \attacksetting.}
\label{tab:cifar100_model_performance_across_models}
\begin{center}
\begin{tabular}{lcc}
\toprule
Model & Clean test (\%) & AE (\%) \\
\hline
(SL) 4-layer-CNN & 52.40 & 3.80 \\
(SL) 4-layer-CNN-adv & 50.77 & 4.06 \\
(RL) 4-layer-CNN & 28.38 & 15.06 \\
(RL) 4-layer-CNN-adv & 29.89 & 17.13 \\
\hline
(SL) 6-layer-CNN & 64.75 & 2.53 \\
(SL) 6-layer-CNN-adv & 63.61 & 2.83 \\
(RL) 6-layer-CNN & 59.80 & 13.06 \\
(RL) 6-layer-CNN-adv & 56.54 & 25.51 \\
\hline
(SL) Resnet18 & 69.36 & 14.83 \\
(SL) Resnet18-pt & 80.76 & 30.65 \\
(SL) Resnet18-adv & 68.76 & 12.61 \\
(SL) Resnet18-pt-adv & 79.70 & 26.55 \\
(RL) Resnet18 & 67.08 & 32.15 \\
(RL) Resnet18-pt & 77.22 & 45.91 \\
(RL) Resnet18-adv & 65.07 & 29.51 \\
(RL) Resnet18-pt-adv & 74.68 & 41.93 \\
\hline
\end{tabular}
\end{center}
\end{table}

\begin{table}[htbp]
\caption{Transfer analysis in CIFAR-100 datasets on 4-layer-CNN, evaluation under \attacksetting.}
\label{tab:cifar100_transfer_analysis_4layerCNN}
\begin{center}
\resizebox{\linewidth}{!}{
\begin{tabular}{lcccc}
\toprule
Model & (SL) 4-layer-CNN & (SL) 4-layer-CNN-adv & (RL) 4-layer-CNN & (RL) 4-layer-CNN-adv \\
\hline
(SL) 4-layer-CNN & 0.02 & 1.91 & 21.65 & 23.74 \\
(SL) 4-layer-CNN-adv & 3.16 & 0.04 & 23.54 & 25.18 \\
(RL) 4-layer-CNN & 5.37 & 6.17 & 10.48 & 11.89 \\
(RL) 4-layer-CNN-adv & 5.68 & 6.26 & 10.81 & 12.40 \\
\hline
(SL) 6-layer-CNN & 7.47 & 9.47 & 27.57 & 26.62 \\
(SL) 6-layer-CNN-adv & 7.85 & 9.93 & 27.82 & 27.03 \\
(RL) 6-layer-CNN & 5.69 & 8.30 & 24.16 & 23.64 \\
(RL) 6-layer-CNN-adv & 15.60 & 16.73 & 30.88 & 30.10 \\

\hline
\end{tabular}
}
\end{center}
\end{table}

\begin{table}[htbp]
\caption{Transfer analysis in CIFAR-100 datasets on 6-layer-CNN, evaluation under \attacksetting.}
\label{tab:cifar100_transfer_analysis_6layerCNN}
\centering
\resizebox{\linewidth}{!}{
\begin{tabular}{lcccc}
\toprule
Model & (SL) 6-layer-CNN & (SL) 6-layer-CNN-adv & (RL) 6-layer-CNN & (RL) 6-layer-CNN-adv \\
\hline
(SL) 6-layer-CNN & 0.03 & 0.40 & 8.27 & 16.59 \\
(SL) 6-layer-CNN-adv & 1.32 & 0.07 & 9.09 & 17.78 \\
(RL) 6-layer-CNN & 1.82 & 1.30 & 8.04 & 15.89 \\
(RL) 6-layer-CNN-adv & 20.17 & 14.52 & 13.08 & 18.16 \\
\hline
(SL) 4-layer-CNN & 30.50 & 27.22 & 30.70 & 29.60 \\
(SL) 4-layer-CNN-adv & 33.34 & 30.23 & 33.76 & 30.53 \\
(RL) 4-layer-CNN & 21.63 & 20.61 & 21.83 & 21.16 \\
(RL) 4-layer-CNN-adv & 21.50 & 19.63 & 21.87 & 20.35 \\
\hline
\end{tabular}%
}
\end{table}

\begin{table}[htbp]
\caption{Transfer analysis in CIFAR-100 on ResNet-18 (non-pretrained), evaluation under \attacksetting.}
\label{tab:cifar100_transfer_resnet18_nopt}
\centering
\resizebox{\linewidth}{!}{%
\begin{tabular}{lcccc}
\toprule
Model & (SL) Resnet18 & (SL) Resnet18-adv & (RL) Resnet18 & (RL) Resnet18-adv \\
\hline
(SL) Resnet18        & 14.83 & 59.61 & 51.09 & 56.82 \\
(SL) Resnet18-adv    & 52.08 & 12.61 & 48.84 & 54.36 \\
(RL) Resnet18        & 51.47 & 57.22 & 32.15 & 48.92 \\
(RL) Resnet18-adv    & 65.37 & 65.99 & 60.73 & 29.51 \\
\bottomrule
\end{tabular}
}
\end{table}

\begin{table}[htbp]
\caption{Transfer analysis in CIFAR-100 on ResNet-18 (pretrained), evaluation under \attacksetting.}
\label{tab:cifar100_transfer_resnet18_pt}
\centering
\resizebox{\linewidth}{!}{%
\begin{tabular}{lcccc}
\toprule
Model & (SL) Resnet18-pt & (SL) Resnet18-pt-adv & (RL) Resnet18-pt & (RL) Resnet18-pt-adv \\
\hline
(SL) Resnet18-pt     & 30.65 & 31.36 & 41.02 & 56.81 \\
(SL) Resnet18-pt-adv & 27.70 & 26.55 & 41.23 & 56.95 \\
(RL) Resnet18-pt     & 47.20 & 47.26 & 45.91 & 58.98 \\
(RL) Resnet18-pt-adv & 71.38 & 69.74 & 68.36 & 41.93 \\
\bottomrule
\end{tabular}
}
\end{table}

\subsection{ImageNet-100}
The complete performance on ImageNet-100 across a 4-layer CNN, 6-layer CNN, and ResNet-18 is shown in Table~\ref{tab:imagenet100_model_performance_across_models}. Since our main analyses focus on CIFAR-10/100, we treat ImageNet-100 as a supplementary scale check for verification; therefore, the ImageNet-100 results do not contain transfer analysis. On ImageNet-100, RL-adv exhibits a notable clean accuracy drop (45.92\% vs.\ 57.64\% for SL) with only marginal robustness gains over plain RL. This suggests that the current adversarial training configuration (designed primarily for CIFAR-scale experiments) may require task-specific tuning for higher-resolution, more complex datasets.

\begin{table}[htbp]
\caption{Model robustness evaluation (train, test, and AEs) in ImageNet-100 datasets across models, evaluation under \attacksetting.}
\label{tab:imagenet100_model_performance_across_models}
\begin{center}
\begin{tabular}{lcc}
\toprule
Model & Clean test (\%) & AE (\%) \\
\hline
(SL) 4-layer-CNN & 48.58 & 2.80 \\
(SL) 4-layer-CNN-adv & 45.76 & 2.92 \\
(RL) 4-layer-CNN & 47.88 & 10.20 \\
(RL) 4-layer-CNN-adv & 47.56 & 11.06 \\
\hline
(SL) 6-layer-CNN & 57.64 & 5.72 \\
(SL) 6-layer-CNN-adv & 58.00 & 5.36 \\
(RL) 6-layer-CNN & 55.60 & 18.04 \\
(RL) 6-layer-CNN-adv & 45.92 & 18.24 \\
\hline
(SL) Resnet18 & 74.00 & 45.00 \\
(SL) Resnet18-adv & 74.90 & 42.62 \\
(RL) Resnet18 & 73.92 & 49.02 \\
(RL) Resnet18-adv & 65.28 & 42.96 \\
\hline
\end{tabular}
\end{center}
\end{table}

\section{AutoAttack Evaluation on CNN Backbones}\label{app:aa_results}
In addition to evaluating robustness under standard PGD attacks, we further adopt AutoAttack as a complementary and more reliable assessment of adversarial robustness. For experiments on CIFAR-10, CIFAR-100, and ImageNet-100, we set the maximum perturbation budgets for AutoAttack to \(\epsilon\) = 7.0, 7.0, and 3.5, respectively, using the standard AutoAttack configuration. The full AutoAttack evaluation results for the 6-layer CNN on CIFAR-100 and ImageNet-100 are provided in Table~\ref{tab:aa_result_6layercnn_cifar100} and Table~\ref{tab:aa_result_6layercnn_imagenet100}. On CIFAR-10, RL-adv achieves a substantial margin over SL-adv across all AutoAttack components. On CIFAR-100 and ImageNet-100, the RL-adv advantage is smaller and, under APGD-CE on CIFAR-100, SL-adv slightly outperforms RL-adv (19.02\% vs.\ 17.95\%). This reduced margin on more complex datasets is consistent with the observation that RL's gradient disruption effect is most pronounced when model capacity is sufficient relative to task complexity (see also the clean accuracy discussion in Appendix~\ref{app:cifar_100}). Tuning adversarial training hyperparameters separately for each dataset-paradigm combination may recover a larger margin, but we leave this to future work to maintain a fair comparison with identical configurations across all settings.

\begin{table}[htbp]
\centering
\caption{AutoAttack results for CNN on CIFAR-100 dataset.}
\label{tab:aa_result_6layercnn_cifar100}
\resizebox{\linewidth}{!}{
\begin{tabular}{lccccc}
\toprule
              & Clean (\%) & APGD-CE (\%) & APGD-T (\%) & FAB-T (\%) & SQUARE (\%) \\
\midrule
6-layer-CNN-SL      & 64.75 & 4.75    & 3.91   & 3.91  & 3.48   \\
6-layer-CNN-SL-adv & 63.30  & 19.02   & 15.91  & 15.91 & 13.95  \\
6-layer-CNN-RL       & 59.80  & 4.19    & 3.44   & 3.44  & 3.04   \\
6-layer-CNN-RL-adv & 56.54 & 17.95 & 17.31 & 17.31 & 15.14 \\
\bottomrule
\end{tabular}
}
\end{table}

\begin{table}[htbp]
\centering
\caption{AutoAttack results for CNN on ImageNet-100 dataset.}
\label{tab:aa_result_6layercnn_imagenet100}
\resizebox{\linewidth}{!}{
\begin{tabular}{lccccc}
\toprule
              & Clean (\%) & APGD-CE (\%) & APGD-T (\%) & FAB-T (\%) & SQUARE (\%) \\
\midrule
6-layer-CNN-SL      & 57.64 & 5.64    & 4.24   & 4.24  & 3.62   \\
6-layer-CNN-SL-adv & 58.00 & 11.54   & 9.34   & 9.34  & 8.08    \\
6-layer-CNN-RL       & 55.60  & 4.96    & 3.52   & 3.52  & 3.10     \\
6-layer-CNN-RL-adv & 45.92 & 13.08 & 10.72 & 10.72 & 8.94 \\
\bottomrule
\end{tabular}
}
\end{table}

% \subsection{Pareto Analysis: Clean-Robust Trade-off}\label{app:pareto}
% To investigate the trade-off between clean accuracy and adversarial robustness, we conduct a trade-off evaluation using a 6-layer CNN architecture on CIFAR-10. We compare Supervised Learning (SL) and Reinforcement Learning (RL) under identical training settings, including a learning rate of $1 \times 10^{-4}$ and a batch size of 256. To account for optimization stochasticity, we train both RL-adv and SL with three independent random seeds $\{10, 20, 42\}$, while the seed for the standard PGD attack is fixed to 33. We report the mean clean accuracy and robust accuracy (mean $\pm$ standard deviation across seeds). Robustness is evaluated using a standard PGD, ensuring a consistent evaluation protocol.

\section{Proofs of Theoretical Statements}\label{app:theory_proofs}
\subsection{Proof of Proposition~\ref{prop:loss_growth}}
\begin{proof}
By Assumption~\ref{assump:loss_smoothness}, $J_M(\cdot,y)$ is $L_J$-smooth on $C$. Therefore, for each PGD step,
\[
J_M(x_{t+1},y)
\le
J_M(x_t,y)
+
\inner{\nabla_x J_M(x_t,y)}{x_{t+1}-x_t}
+
\frac{L_J}{2}\norm{x_{t+1}-x_t}_2^2.
\]
Using $g_t=\nabla_xJ_M(x_t,y)$ and $s_t=x_{t+1}-x_t$, this becomes
\[
J_M(x_{t+1},y)-J_M(x_t,y)
\le
\inner{g_t}{s_t}
+
\frac{L_J}{2}\norm{s_t}_2^2.
\]
We next bound the realized step length. Since $x_t\in C$, we have $\Pi_C(x_t)=x_t$. Euclidean projection onto a closed convex set is non-expansive, meaning that for any $a,b$,
\[
\norm{\Pi_C(a)-\Pi_C(b)}_2
\le
\norm{a-b}_2.
\]
Taking $a=x_t+\alpha d_t$ and $b=x_t$ gives
\[
\norm{s_t}_2
=
\norm{x_{t+1}-x_t}_2
=
\norm{\Pi_C(x_t+\alpha d_t)-\Pi_C(x_t)}_2
\le
\norm{\alpha d_t}_2.
\]
By construction, $\norm{d_t}_2=1$ when $g_t\ne 0$ and $\norm{d_t}_2=0$ when $g_t=0$. Hence $\norm{d_t}_2\le 1$, and therefore
\[
\norm{s_t}_2=\ell_t\le \alpha.
\]
By Cauchy--Schwarz,
\[
\inner{g_t}{s_t}
\le
\norm{g_t}_2\norm{s_t}_2
\le
\alpha\norm{g_t}_2.
\]
Moreover,
\[
\frac{L_J}{2}\norm{s_t}_2^2
\le
\frac{L_J\alpha^2}{2}.
\]
Combining these inequalities yields the one-step bound
\[
J_M(x_{t+1},y)-J_M(x_t,y)
\le
\alpha\norm{g_t}_2
+
\frac{L_J\alpha^2}{2}.
\]
Summing from $t=0$ to $\tau-1$ gives
\[
J_M(x_\tau,y)-J_M(x_0,y)
\le
\alpha\sum_{t=0}^{\tau-1}\norm{g_t}_2
+
\frac{L_J\alpha^2}{2}\tau.
\]
Using the definition
\[
\mathcal G_\tau(M;x_0,y)
=
\sum_{t=0}^{\tau-1}\norm{g_t}_2
\]
gives the equivalent form
\[
J_M(x_\tau,y)
\le
J_M(x_0,y)
+
\alpha\mathcal G_\tau(M;x_0,y)
+
\frac{L_J\alpha^2}{2}\tau.
\]
\end{proof}

\subsection{Proof of Proposition~\ref{prop:step_interference}}
\begin{proof}
The first identity follows by telescoping:
\[
x_\tau-x_0
=
(x_\tau-x_{\tau-1})+(x_{\tau-1}-x_{\tau-2})+\cdots+(x_1-x_0)
=
\sum_{t=0}^{\tau-1}s_t.
\]
Taking squared norms gives
\[
\norm{x_\tau-x_0}_2^2
=
\left\|\sum_{t=0}^{\tau-1}s_t\right\|_2^2.
\]
Expanding the squared norm,
\[
\left\|\sum_{t=0}^{\tau-1}s_t\right\|_2^2
=
\sum_{t=0}^{\tau-1}\norm{s_t}_2^2
+
2\sum_{0\le i<j\le \tau-1}\inner{s_i}{s_j}.
\]
Since $s_t=\ell_tu_t$ whenever $\ell_t>0$, and the identity is unchanged when $\ell_t=0$ because all terms involving that step vanish,
\[
\norm{s_t}_2^2=\ell_t^2,
\qquad
\inner{s_i}{s_j}=\ell_i\ell_j\inner{u_i}{u_j}.
\]
Substituting these expressions yields
\[
\norm{x_\tau-x_0}_2^2
=
\sum_{t=0}^{\tau-1}\ell_t^2
+
2\sum_{0\le i<j\le \tau-1}\ell_i\ell_j\inner{u_i}{u_j}.
\]
Since $R_\tau(M;x_0,y)=\norm{x_\tau-x_0}_2$, the result follows.
\end{proof}

\subsection{Proof of Theorem~\ref{thm:finite_budget_attackability}}
\begin{proof}
The attack-loss growth bound is exactly Proposition~\ref{prop:loss_growth}. For the margin-preservation bound, Assumption~\ref{assump:local_margin} gives, for any $a,b\in C$,
\[
|m_M(a,y)-m_M(b,y)|\le L_m\norm{a-b}_2.
\]
Taking $a=x_\tau$ and $b=x_0$ gives
\[
m_M(x_\tau,y)
\ge
m_M(x_0,y)-L_m\norm{x_\tau-x_0}_2.
\]
By definition, $R_\tau(M;x_0,y)=\norm{x_\tau-x_0}_2$, so
\[
m_M(x_\tau,y)
\ge
m_M(x_0,y)-L_mR_\tau(M;x_0,y).
\]
The expansion of $R_\tau(M;x_0,y)^2$ follows from Proposition~\ref{prop:step_interference}. This proves the theorem.
\end{proof}

% \section{LLM Usage Disclosure}\label{app:llm_disclosure}
% We used Large Language Models (LLMs) (shown in Table~\ref{tab:llm_disclosure}) \textbf{only for English-language polishing}, including grammar correction and wording suggestions. \textbf{No new scientific contents, including equations, experimental designs, or codes} were generated by the models. We did not provide any non-public data or code to the models. All model suggestions were manually reviewed and edited by the authors for technical correctness. The authors take full responsibility for the final content; the  LLM is not an author.

% \begin{table}[H]
% \centering
% \caption{LLM usage disclosure.}
% \label{tab:llm_disclosure}
% \begin{tabular}{lcc}
% \toprule
% Model & Version & Access date \\
% \midrule
% ChatGPT & GPT-5.5            & 2026.04--2026.04 \\
% Claude  & Opus 4.6           & 2026.03--2026.04 \\
% ChatGPT & GPT-5.4 (Thinking) & 2026.02--2026.04 \\
% \bottomrule
% \end{tabular}
% \end{table}

\end{document}